\newcommand{\ignore}[1]{}
\newtheorem{theorem}{Theorem}
\newtheorem{lemma}[theorem]{Lemma}
\theoremstyle{definition}
\newtheorem{definition}{Definition}
\newcommand{\abs}[1]{\left\lvert#1\right\rvert}
\renewcommand{\(}{\left(}
\renewcommand{\)}{\right)}
\renewcommand{\[}{\left[}
\renewcommand{\]}{\right]}
\newcommand{\norm}[2][]{\left\Vert#2\right\Vert_{#1}}
\newcommand{\R}{\mathbb{R}}
\newcommand{\Y}{\mathcal{Y}}
\newcommand{\Normal}{\mathcal{N}}
\newcommand{\reals}{\mathbb{R}}
\newcommand{\MMD}{\text{MMD}}
\DeclareMathOperator*{\E}{\mathbb{E}}
\begin{document}






\twocolumn[

\aistatstitle{On the Optimization Landscape of Maximum Mean Discrepancy}

\aistatsauthor{ Itai Alon \And Amir Globerson \And  Ami Wiesel }

\aistatsaddress{ Hebrew University of Jerusalem \And  Tel Aviv University \And Hebrew University of Jerusalem } ]

\begin{abstract}
Generative models have been successfully used for generating realistic signals. Because the likelihood function is typically intractable in most of these models, the common practice is to use ``implicit'' models that avoid likelihood calculation. However, it is hard to obtain theoretical guarantees for such models. In particular, it is not understood when they can globally optimize their non-convex objectives. Here we provide such an analysis for the case of Maximum Mean Discrepancy (MMD) learning of generative models. We prove several optimality results, including for a Gaussian distribution with low rank covariance (where likelihood is inapplicable) and a mixture of Gaussians. Our analysis shows that that the MMD optimization landscape is benign in these cases, and therefore gradient based methods can converge only to global minimizers. 
\end{abstract}

\section{Introduction}

Generative models are parameterized distributions that are trained to match the distribution of observed data. 
Although generative modeling approaches have achieved impressive results, little is known about theoretical properties of training such models. In particular, when does training converge to a good approximation of the underlying distribution, and how many samples are required for such an approximation. 

Due to its practical importance, the problem of generative modeling has received considerable attention. Most generative methods model the data via a distribution $P(x;\theta)$ that depends on a parameter $\theta$ to be estimated from observed data. A key approach in this context is maximum likelihood estimation (MLE), which maximizes $\log{P(x;\theta)}$ with respect to the parameter.  However, in more complex models it is typically hard to calculate the probability of a given point, and thus MLE is not directly applicable. Another route that has turned out to be very effective is via ``implicit'' models. These do not use likelihood directly, but rather generate a sample from the model distribution (since sampling is typically easy) and use some divergence measure to compare the generated and real sample. 

There are many instances of implicit models, including Generative Adversarial Networks (GAN, \cite{goodfellow2014generative}), Wassertsein GAN \cite{gulrajani2017improved}, and Maximum Mean Discrepancy (MMD) based methods which we focus on here. All of these employ a function that takes as input two samples and returns a scalar that 
reflects the ``statistical distance'' between these. For example in GANs this is calculated as the minimum loss of a classifier trained to discriminate between the samples. 

MMD is essentially a measure of similarity between two distributions
\cite{gretton2012kernel,sutherland2017generative, wang2018improving}. It measures similarity by considering a set of functions $f(x)$ (specified by a kernel), and checking how different their expected values are for the two distributions. MMD has several nice properties, including the fact that it is zero if and only if the two distributions are equal, that it does not require min-max optimization and that it can be calculated from samples and does not require explicit likelihood calculations. MMD is also closely related to WGAN \cite{gulrajani2017improved}, where the functions $f(x)$ are optimized over.

Despite their widespread use, little is known about the theoretical properties of MMD based learning of generative models, and in particular its optimization properties. In this paper, we provide a study of MMD learning for several important instances of generative models, showing that the optimization landscape in these cases is in fact benign. 

We analyze the MMD optimization landscape in the following three settings. We begin with Gaussian distributions with an unknown mean, which already introduce non-convexity into the MMD optimization problem. We then consider Gaussians with low-rank covariance matrices, which can be used to capture low dimensional structures in data, and where an MLE does not exist.
We then continue to the more challenging case of mixtures of Gaussian distributions, which capture the multimodal structure that is key to complex generative models. From a theoretical perspectives mixture models are interesting, since they are generally intractable for maximum-likelihood, but are efficiently learnable under certain conditions \cite{daskalakis2017ten} which we also focus on here. 
Gaussian mixtures are already challenging for GAN models \cite{farnia2020gat}, further motivating a theoretical understanding of when they can be learned effectively. 

\textbf{Our contribution:}
We analyze the landscape of MMD optimization for the setting of population loss of the above three settings. We show that in these cases, the landscape is benign, implying that gradient descent (GD) can converge only to global minimizers. We support our theoretical findings with empirical evaluation.

\subsection{Related Work}
MMDs are an instance of Integral Probability Metrics (IPM) \cite{muller1997integral} which measure distance between distributions via the difference of expected values of certain functions. MMD was originally used in the field of machine learning for two sample tests, and much work was devoted to the power of these tests \cite{gretton2012kernel}.

With the growing interest in neural based generators, it became clear that MMDs can also be used for learning such models. This is due to the fact that MMDs can work directly on samples, and do not require likelihood computations. Many different modeling schemes involving MMD have been introduced, including Moment Matching Networks \cite{li2015generative,dziugaite2015training} and MMD-GANs \cite{li2017mmd}, as well as variational autoencoders \cite{rustamov2019closed}. Some MMD based approaches also allow some learning of the kernel parameters to provide better flexibility with respect to the generating distribution \cite{li2017mmd, li2015generative, binkowski2018demystifying, dziugaite2015training, sutherland2017generative, wang2018improving}.

MMDs have been well studied from a statistical perspective. Namely, the number of samples needed for an MMD generative modeling method in order to approximate the true underlying parameters. In this context, there are two relevant sample sizes: those sampled from the true distribution and those sampled from the model. Such  analyzes for dependence between the number of samples used and the ability of the MMD metric to recover the true distribution see \cite{tolstikhin2016minimax, ji2018minimax}.

 Optimization of MMD measures has been recently studied in several works \cite{arbel2019maximum, mroueh2021convergence} where gradient flow formalisms for MMD were studied towards the goal of improved regularization and stability of the optimization. However, they do not study optimization of specific parameteric models as we do here, and do not provide related landscape results.

 Since we consider learning Gaussian mixtures, our work is also related to the long line of work on parameter estimation in mixture models. It is known that learning a mixture of $k$ Gaussians can be done in time polynomial in the dimension and exponential in $k$ \cite{moitra2010settling}. Thus, the $k=2$ case which we study here is tractable (using variants of moment matching techniques), which motivates the question of whether it can be learned using MMD GANs. Furthermore, it was also shown that EM can efficiently learn the parameters in this case \cite{daskalakis2017ten}. The landscape of the likelihood function in this case was also explored in \cite{jin2016local}, where it was shown to have bad local minima for mixtures of three or more Gaussians.

Our main focus in this work is the optimization landscape of MMD. The question of optimization guarantees for ``implicit'' generative models is important due to the many practical applications of optimizing these model. However, not many results have been obtained. In \cite{feizi2017understanding} it was shown that a Gaussian with unknown full-rank covariance is globally optimizable using a quadratic version of a WGAN (see Section \ref{section:gaussian-cov}). In \cite{pmlr-v80-li18d} it was shown that GAN optimization can fail to find the right model because of instabilities related to discriminator optimization. 

Another relevant line of work is on ascent-descent algorithms which underlie GAN optimization \cite{daskalakis2018limit}. But these works typically  show convergence to local optima. Finally, \cite{lei2020sgd} study WGAN optimization for a one hidden layer generative model. Their setting is related to our unknown covariance result (because if one ignores the non-linearity in their case, their model is a Gaussian with unknown covariance). However, our main challenge is the low-rank of the covariance, whereas in their case it is full rank (see Theorem 3 In \cite{lei2020sgd} which requires $k=d$). 

\section{Problem Setup}
Let $P_{\theta}(x)$ denote a distribution on a random variable $X$, which is defined via a parameter vector $\theta$. For example $P_{\theta}(x)$ could be a Gaussian distribution with mean $\theta$. We consider the parameter estimation problem where data is generated i.i.d from some $P_{\theta^*}$ and our goal is to recover $\theta^*$ from these observation. We let $Y$ denote the data generated from $P_{\theta^*}$.

We assume access to a generator that allows us to draw a synthetic dataset of i.i.d samples $X\sim P_\theta$ for any given $\theta$. 
Thus, we would like to find a $\theta$ such that $X\sim P_\theta$ is as close as possible to $Y$.

As mentioned above, MMD is often used to measure the distance between the generated sample $X$ and the ``true'' sample $Y$, as a function of $\theta$. Our goal here is to understand the optimization landscape of this problem. To simplify analysis, we focus on the population case, where the size of both the $X$ and $Y$ samples goes to infinity. In this case, the MMD distance is given by the following:\footnote{More precisely, MMD is formally defined as the squared root of the formula in (\ref{mmd_def}).}
\begin{multline}\label{mmd_def}
    \MMD(\theta,\theta^*) = \E_{X,X' \overset{i.i.d.}\sim P_\theta}[k(X,X')] \\+ \E_{Y,Y' \overset{i.i.d.}\sim P_{\theta^*}}[k(Y,Y')] - 2 \E_{\substack{X \sim P_\theta\\Y \sim P_{\theta^*}\\i.i.d.}}[k(X,Y)]
\end{multline}
where $k$ is a characteristic kernel function (over reproducing kernel Hilbert space). For simplicity, we focus on the Gaussian RBF kernel: 
\begin{equation}
    k(x,y) = e^{-\frac1{2\sigma^2} \norm{x-y}^2}
\end{equation}
where $x,y \in \R^d$ and $\sigma^2$ is a width hyperparameter. 
The MMD optimization problem is: $\min_{\theta} \MMD(\theta,\theta^*)$. It is the landscape of this problem that we would like to understand. Specifically, we would like to understand when optimization methods are expected to converge to $\theta=\theta^*$ and not get stuck at local optima.

The statistical properties of MMD are well understood. Due to the characteristic kernel, MMD is a distance function, i.e., $\theta^*=\theta$ if and only if $\MMD = 0$. To enjoy these statistical benefits in large scale, MMD must be numerically minimized using first order descent methods. Unfortunately, the loss is usually non-convex, and it is not clear whether its optimization is tractable. 

It is well known that GD with a random initialization and sufficiently
small constant step size almost surely converges to a local minimizer under certain conditions \cite{lee2016gradient, jin2017escape, sun2015nonconvex}. Such results require that the saddle points of the loss are ``well behaved'', as captured by the ``strict saddle property'' defined below. 



\begin{definition}[\cite{lee2016gradient}]
A function $f(\theta)$ satisfies the ``strict saddle property'' if
each critical point $\theta$ of $f$ is either a local minimizer, or a ``strict saddle'', i.e, its Hessian $\nabla^2f(\theta)$ has at least one
strictly negative eigenvalue.
\end{definition}

In \cite{lee2016gradient} it is shown that if $f$ has the ``strict saddle property'', has a Lipschitz gradient and the iterations have a limit, then GD will converge to a local minimizer. This result is very useful, because it means that if one also proves that the loss function has no local minimizers that are not global, then GD can only converge to a global minimizer.


Here we apply this proof strategy to the case of MMD. In particular, we will show that for several MMD settings of interest, the loss has no bad local minima and satisfies the strict saddle property. The next subsections provide this analysis for three fundamental estimation models.
\section{Theoretical Results}
We next consider three parameter estimation problems, and show that MMD has a benign landscape for those. Namely, the MMD objective has no local minima and all its saddle-points are strict. This implies that gradient based methods can find a global optimum \cite{lee2016gradient}. We focus on the cases of Gaussian with an unknown mean (Sec.~\ref{section:gaussian-mean}), Gaussian with unknown low-rank covariance (Sec.~\ref{section:gaussian-cov}) and a mixture of Gaussians with unknown means (Sec.~\ref{section:gmm-mean}). 
 
\subsection{Gaussian with Unknown Mean}
\label{section:gaussian-mean}
We begin with probably the simplest parameter estimation problem, namely a Gaussian distribution with unknown mean and a known covariance matrix. This problem is trivially solved by MLE. The negative log likelihood is strongly convex and its global minimum, the sample mean, can be efficiently found using GD. The following theorem states that the landscape of MMD is also benign in this case and can be minimized to identify the true mean.

\begin{theorem}
\label{theorem:gaussian-mean-equation}
Let $P_\mu = \Normal(\mu,\Sigma)$ with an unknown $\mu^*$ parameter. The function $\MMD(\mu^*, \mu)$ is given by:
\begin{equation}
     \frac2{\sqrt{\abs{2\frac1{\sigma^2}\Sigma + I}}} \( 1 - e^{-\frac12 \frac1{\sigma^2} (\mu-\mu^*)^T (2\frac1{\sigma^2}\Sigma + I)^{-1} (\mu-\mu^*)} \) ~.
\end{equation}
It is a quasi-convex function of $\mu$, and has a single stationary point at $\mu = \mu^*$, which is the global minimum.
\end{theorem}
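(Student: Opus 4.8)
The plan is to reduce everything to one scalar Gaussian integral, then read off the geometry.

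\textbf{Step 1 (the key integral).} The single computation that drives the whole proof is: for $Z\sim\Normal(m,C)$ with $C\succeq0$ (possibly singular),
\begin{equation*}
\E_{Z\sim\Normal(m,C)}\!\left[e^{-\frac1{2\sigma^2}\norm{Z}^2}\right]
= \frac{1}{\sqrt{\abs{\frac1{\sigma^2}C+I}}}\,\exp\!\left(-\frac1{2\sigma^2}\,m^T\!\left(\tfrac1{\sigma^2}C+I\right)^{-1}\!m\right).
\end{equation*}
I would prove this by writing $Z=m+C^{1/2}W$ with $W\sim\Normal(0,I_d)$, which turns the left side into a genuinely non-degenerate Gaussian integral in $W$ (so singular $\Sigma$ causes no trouble, since $\frac1{\sigma^2}C+I\succ0$ regardless), and then completing the square / using the moment generating function of $W$.

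\textbf{Step 2 (closed form for MMD).} For $X,X'\sim P_\mu$ and $Y,Y'\sim P_{\mu^*}$ independent, we have $X-X'\sim\Normal(0,2\Sigma)$, $Y-Y'\sim\Normal(0,2\Sigma)$, and $X-Y\sim\Normal(\mu-\mu^*,2\Sigma)$. Plugging $C=2\Sigma$ (so $\frac1{\sigma^2}C+I=\frac2{\sigma^2}\Sigma+I$) and $m\in\{0,\mu-\mu^*\}$ into the identity of Step 1, the first two terms of (\ref{mmd_def}) each equal $\abs{\frac2{\sigma^2}\Sigma+I}^{-1/2}$ and the cross term equals that same constant times $\exp\!\big(-\frac1{2\sigma^2}(\mu-\mu^*)^T(\frac2{\sigma^2}\Sigma+I)^{-1}(\mu-\mu^*)\big)$. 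Substituting into (\ref{mmd_def}) yields exactly the displayed formula.

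\textbf{Step 3 (landscape).} Write $Q:=(\frac2{\sigma^2}\Sigma+I)^{-1}$, which is positive definite since $\frac2{\sigma^2}\Sigma+I\succeq I$; let $c:=2\,\abs{\frac2{\sigma^2}\Sigma+I}^{-1/2}>0$, $g(\mu):=(\mu-\mu^*)^TQ(\mu-\mu^*)$, and $\phi(t):=1-e^{-t/(2\sigma^2)}$, so that $\MMD(\mu^*,\mu)=c\,\phi\!\left(g(\mu)\right)$. Since $\phi$ is strictly increasing on $[0,\infty)$, every sublevel set $\{\mu:\MMD(\mu^*,\mu)\le\alpha\}$ is either empty, all of $\R^d$, or an ellipsoid $\{g(\mu)\le\text{const}\}$; in all cases convex, which is quasi-convexity. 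For stationary points, $\nabla_\mu\MMD(\mu^*,\mu)=\frac{c}{\sigma^2}e^{-g(\mu)/(2\sigma^2)}\,Q(\mu-\mu^*)$: the scalar prefactor is strictly positive and $Q$ is invertible, so the gradient vanishes if and only if $\mu=\mu^*$. Finally $\MMD(\mu^*,\mu)\ge0$ with equality iff $g(\mu)=0$ iff $\mu=\mu^*$, so this unique stationary point is the global minimum.

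There is no deep obstacle here: the only points requiring care are justifying the Gaussian integral of Step 1 when $\Sigma$ is rank-deficient (handled by the $Z=m+C^{1/2}W$ substitution) and bookkeeping the $\frac1{\sigma^2}$ and $2\Sigma$ factors correctly; the landscape conclusions are then immediate from the ``monotone function of a convex function is quasi-convex'' principle together with $Q\succ0$.
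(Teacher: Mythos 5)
Your proposal is correct and follows essentially the same route as the paper: the same Gaussian-smoothing integral (the paper's Lemma \ref{lemma:chi_square}), the same decomposition of the three MMD expectations via $X-X',\,Y-Y'\sim\Normal(0,2\Sigma)$ and $X-Y\sim\Normal(\mu-\mu^*,2\Sigma)$, and an equivalent quasi-convexity argument (your sublevel-set check is the same monotone-composition fact the paper invokes). Your Step 3 is slightly more explicit than the paper's proof, since you verify via the gradient that $\mu=\mu^*$ is the unique stationary point and your substitution handles a possibly singular $\Sigma$, but these are refinements rather than a different approach.
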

\begin{proof}
We rely on the following integral (see Supplementary Material, Lemma \ref{lemma:chi_square}):
\begin{multline}\label{integral}
    \int_{\R^d} e^{-\frac12\frac1{\sigma^2} \norm{z}^2} \cdot \Normal(z;\mu,\Sigma) dz
        \\= \abs{\frac1{\sigma^2}\Sigma + I}^{-\frac12} e^{-\frac12 \frac1{\sigma^2} \mu^T (\frac1{\sigma^2}\Sigma + I)^{-1} \mu}
\end{multline}
We compute each expectation in (\ref{mmd_def}) separately. If $X, X' \sim \Normal(\mu, \Sigma), Y, Y' \sim \Normal(\mu^*, \Sigma)$ are independent, then $X - Y \sim \Normal(\mu-\mu^*, 2\Sigma)$, and $X-X'$, $Y-Y' \sim \Normal(0, 2\Sigma)$. Using (\ref{integral}) we have that $\E[k(X,Y)]$ is equal to:
    \begin{equation}
         \frac1{\abs{2\frac1{\sigma^2}\Sigma + I}^\frac12} e^{-\frac12 \frac1{\sigma^2} (\mu - \mu^*)^T (2\frac1{\sigma^2}\Sigma + I)^{-1} (\mu - \mu^*)}
    \end{equation}
    and: $\E[k(X,X')] = \E[k(Y,Y')] = \frac1{\abs{2 \frac1{\sigma^2} \Sigma + I}^\frac12}$.
\end{proof}

The theorem implies that standard descent methods on MMD can easily find the unknown Gaussian mean just like MLE, {because of quasi-convexity and lack of saddle points \cite{hazan2015beyond}.} Note however that MLE is strongly convex whereas MMD is quasi-convex as illustrated in Figure \ref{fig:gaussian_mmd}. The gradient vanishes if we initialize the algorithm too far from the true $\mu^*$ and the optimization is much harder. On the other hand, MMD has a width hyperparameter that can be used to widen the convex region of the landscape. For example, if we know a priori that $\mu^*$ is within a ball of radius $r$ around the initial point, then choosing a width $\sigma^2\geq 2r$ ensures a strongly convex objective where GD converges linearly.

\begin{figure}[t]
\centerline{\includegraphics[width=.5\textwidth]{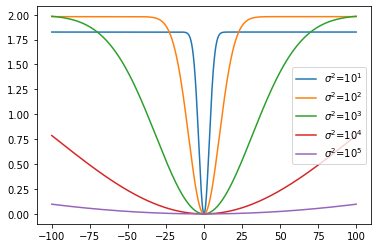}}
\caption{MMD loss of Gaussian over $\R$ with unknown mean. Different widths $\sigma^2$ and $\mu^*=0$.}
\label{fig:gaussian_mmd}
\end{figure}

\subsection{Gaussian with Unknown Covariance}
\label{section:gaussian-cov}
Our second model is a zero mean multivariate normal with an unknown singular covariance matrix. This is a more challenging setting which is more representative of modern deep learning problems where an MLE does not exist.  Even in the near-singular case, GD on the MLE loss is very slow, because the problem is ill conditioned.
The standard adversarial GAN also fails in this task \cite{cho2019wasserstein}.
 For WGAN, it has been shown that its global optimum is correct 
 \cite[e.g., see][]{feizi2017understanding,cho2019wasserstein} for two results in this flavor). However, it was not shown that this optimum can be found using GD, and in fact \cite{feizi2017understanding} (Theorem 5) points to a failure of optimization in this case.

The following theorem states that the landscape of MMD in this model is benign. The constant $\epsilon$ determines the level of covariance non-singularity, and results also hold for $\epsilon=0$. We consider the case of rank one, since it captures the most ``extreme'' low-rank scenario.
\begin{theorem}
\label{theorem:cov-gaussian-landscape}
Consider the parametric model:
\begin{equation}
    P_{a} = \Normal(0, a{a}^T+\varepsilon^2 I)
    \label{eq:cov_model}
\end{equation} with an unknown $a\in\reals^d$ parameter. Assume training data is generated from $P_{a^*}$. The MMD objective is
    \begin{multline}
    \label{eq:mmd-cov}
    \MMD({a^*}, a) = \\\frac1{\abs{\frac1{\sigma^2} (2aa^T + 2\varepsilon^2I) + I}^\frac12} + \frac1{\abs{\frac1{\sigma^2} (2a^*{a^*}^T + 2\varepsilon^2I) + I}^\frac12} \\- \frac2{\abs{\frac1{\sigma^2} (aa^T + a^*{a^*}^T + 2\varepsilon^2I) + I}^\frac12}
\end{multline}
It has global minima in $\pm a^*$, a maximum in $a=0$, and the rest of the stationary points are (if the radius exists)
\begin{equation}
    \left\{a\ :
    \begin{array}{l}
         a^T a^* = 0  \\
         \norm{a}^2 = \frac{(2\varepsilon^2 + \sigma^2)\((\norm{a^*}^2 + 2\varepsilon^2 + \sigma^2)^\frac13 - (2\varepsilon^2 + \sigma^2)^\frac13 \)}{2 (2\varepsilon^2 + \sigma^2)^\frac13 - (\norm{a^*}^2 + 2\varepsilon^2 + \sigma^2)^\frac13}
    \end{array}\right\}
\end{equation}
and are all strict saddles.
\end{theorem}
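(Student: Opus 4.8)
The plan is to follow the template of Theorem~\ref{theorem:gaussian-mean-equation} (we take $a^*\ne 0$; the case $a^*=0$ is immediate, $\MMD$ then being a strictly increasing function of $\norm{a}^2$). For independent $X,X'\sim P_a$ and $Y,Y'\sim P_{a^*}$ we have $X-X'\sim\Normal(0,2(aa^T+\varepsilon^2 I))$, $Y-Y'\sim\Normal(0,2(a^*{a^*}^T+\varepsilon^2 I))$ and $X-Y\sim\Normal(0,aa^T+a^*{a^*}^T+2\varepsilon^2 I)$, so the three expectations in (\ref{mmd_def}) come from the Gaussian integral (\ref{integral}) with $\mu=0$, giving (\ref{eq:mmd-cov}). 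I then simplify the three determinants with the matrix determinant lemma: for $v\in\reals^d$ and $\beta>0$, $\abs{vv^T+\beta I}=\beta^{d-1}(\beta+\norm{v}^2)$, and $\abs{aa^T+a^*{a^*}^T+\beta I}=\beta^{d-2}\bigl((\beta+\norm{a}^2)(\beta+\norm{a^*}^2)-(a^Ta^*)^2\bigr)$. Writing $c=1/(2\varepsilon^2+\sigma^2)$, $\beta_0=(2\varepsilon^2+\sigma^2)/\sigma^2$, $t=\norm{a}^2$, $s=\norm{a^*}^2$ and $r=(a^Ta^*)^2$, this recasts (\ref{eq:mmd-cov}) as $\MMD(a^*,a)=\beta_0^{-d/2}\,\Psi(t,r)$ with
\begin{equation*}
\Psi(t,r)=\frac{1}{\sqrt{1+2ct}}+\frac{1}{\sqrt{1+2cs}}-\frac{2}{\sqrt{(1+ct)(1+cs)-c^2r}}.
\end{equation*}
Two structural facts drive everything: $\MMD$ depends on $a$ only through $t$ and $r$, and $\Psi_r(t,r)=-c^2\bigl((1+ct)(1+cs)-c^2r\bigr)^{-3/2}$ is strictly negative on the admissible region, since $(1+ct)(1+cs)-c^2r\ge 1+ct+cs>0$ by Cauchy--Schwarz.

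Next I compute $\nabla_a\MMD(a^*,a)=\beta_0^{-d/2}\bigl(2\Psi_t\,a+2(a^Ta^*)\Psi_r\,a^*\bigr)$ and enumerate the stationary points, using that this gradient always lies in $\mathrm{span}\{a,a^*\}$. First, $a=0$ always annihilates it. Second, if $a=\lambda a^*$ with $\lambda\neq 0$, the condition collapses to the scalar equation $\Psi_t+s\Psi_r=0$ at $(t,r)=(\lambda^2 s,\lambda^2 s^2)$; substituting, the factors of $(\cdot)^{-3/2}$ force $1+2\lambda^2 cs=1+(1+\lambda^2)cs$, i.e.\ $\lambda^2=1$, so $a=\pm a^*$. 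Third, if $a$ and $a^*$ are linearly independent then $\Psi_t=0$ and $(a^Ta^*)\Psi_r=0$; since $\Psi_r\neq 0$ this forces $a^Ta^*=0$, and then $\Psi_t(t,0)=0$ is equivalent to $(1+2ct)^3=(1+cs)(1+ct)^3$, whose solution $ct=\bigl((1+cs)^{1/3}-1\bigr)/\bigl(2-(1+cs)^{1/3}\bigr)$ unwinds to exactly the radius in the statement --- a positive real, hence a nonempty $(d-2)$-sphere of stationary points, precisely when $d\ge 2$ and $0<\norm{a^*}^2<7(2\varepsilon^2+\sigma^2)$. Finally, $\pm a^*$ are global minima because $\MMD\ge 0$ vanishes iff $P_a=P_{a^*}$ iff $aa^T=a^*{a^*}^T$ iff $a=\pm a^*$, the RBF kernel being characteristic.

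It remains to check the Hessian at $a=0$ and at the third-type points. Differentiating once more and writing $p=a^Ta^*$,
\begin{multline*}
\nabla^2_a\MMD=\beta_0^{-d/2}\bigl(2\Psi_t I+4\Psi_{tt}\,aa^T+2\Psi_r\,a^*{a^*}^T\\
{}+4p\,\Psi_{tr}(a{a^*}^T+a^*a^T)+4p^2\Psi_{rr}\,a^*{a^*}^T\bigr).
\end{multline*}
At $a=0$ this is $\beta_0^{-d/2}\bigl(2\Psi_t(0,0)I+2\Psi_r(0,0)a^*{a^*}^T\bigr)$; since $\Psi_t(0,0)=c\bigl((1+cs)^{-1/2}-1\bigr)<0$ and $\Psi_r(0,0)<0$, the Hessian is negative definite, so $a=0$ is a (strict local) maximum. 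At a third-type point $\bar a$ we have $p=0$ and $\Psi_t=0$, so every term except $4\Psi_{tt}(\bar t,0)\,\bar a\bar a^T+2\Psi_r(\bar t,0)\,a^*{a^*}^T$ drops out; since $\bar a\perp a^*$, the only nonzero eigenvalues are $4\beta_0^{-d/2}\Psi_{tt}(\bar t,0)\norm{\bar a}^2$ (eigenvector $\bar a$) and $2\beta_0^{-d/2}\Psi_r(\bar t,0)\norm{a^*}^2$ (eigenvector $a^*$). The latter is strictly negative by $\Psi_r<0$, so $\bar a$ is a strict saddle; and substituting the stationary relation into $\Psi_{tt}$ gives $\Psi_{tt}(\bar t,0)=\tfrac32 c^2(1+2c\bar t)^{-5/2}(1+c\bar t)^{-1}>0$, so the Hessian is genuinely indefinite there. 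Hence every stationary point other than the global minimizers $\pm a^*$ carries a strictly negative Hessian eigenvalue, the strict saddle property holds, and gradient descent converges to $\pm a^*$ by \cite{lee2016gradient}.

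The crux is the third-type stationary point: pushing the cube-root algebra through to the displayed radius, and especially its Hessian, which has rank at most $2$ and a $(d-2)$-dimensional kernel. With degeneracy that large, ``strict saddle'' cannot come from a generic nondegeneracy argument --- it rests entirely on the explicit sign $\Psi_r<0$ together with the clean post-substitution form of $\Psi_{tt}$, so most of the care goes into those two sign computations.
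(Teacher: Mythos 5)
Your proof is correct, and it takes a genuinely different route from the paper's. The paper works directly in matrix form: it derives the gradient and Hessian of \eqref{eq:mmd-cov} via Sylvester's determinant theorem and Woodbury (Lemmas \ref{lemma:gauusian-cov-gradient}, \ref{lemma:gausian-cov-hessian}), reads off the critical points as eigenvectors of $({a^*}{a^*}^T+cI)^{-1}$ after normalizing $a^*=e_1$, and then verifies the strict-saddle condition ${a^*}^THa^*<0$ by substituting the explicit radius \eqref{eq:cov_a_norm} and grinding through the cube-root algebra in the supplementary. You instead exploit the invariance of the objective, writing $\MMD=\beta_0^{-d/2}\Psi(t,r)$ with $t=\norm{a}^2$, $r=(a^Ta^*)^2$, so that the gradient $2\Psi_t a+2(a^Ta^*)\Psi_r a^*$ and the rank-two-plus-identity Hessian follow from the chain rule; the case split (origin, collinear, orthogonal) then reproduces exactly the paper's stationary set, including the existence condition $0<\norm{a^*}^2<7(2\varepsilon^2+\sigma^2)$ which the paper leaves implicit in ``if the radius exists.'' The main payoff of your parametrization is the saddle verification: at an orthogonal stationary point $\Psi_t=0$ kills the identity block, so $a^*$ is an eigenvector with eigenvalue $2\beta_0^{-d/2}\Psi_r\norm{a^*}^2<0$ directly from $\Psi_r<0$, with no need to re-substitute the radius (your extra computation $\Psi_{tt}>0$, showing genuine indefiniteness, checks out but is not needed for the strict-saddle property); you also avoid the $\norm{a^*}=1$ normalization and handle the $a^*=0$ edge case. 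What the paper's matrix-calculus route buys in exchange is the explicit general-position gradient and Hessian formulas, which it reuses outside this theorem (e.g., for bounding $\norm[2]{\nabla^2\MMD}$ when invoking \cite{lee2016gradient}), whereas your $\Psi(t,r)$ reduction is tailored to the rank-one model; both establish the same landscape claims.
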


\begin{proof}
   The derivation of (\ref{eq:mmd-cov}) is straight forward using (\ref{integral}) for each expectation.

    To simplify the landscape analysis, we rotate the space and assume w.l.o.g that $a^* = e_1$ is the first unit vector with $\norm{a^*}^2 = 1$. Denoting $c := 2\varepsilon^2 + \sigma^2$,
    Lemma \ref{lemma:gauusian-cov-gradient} in the Supplementary Material provides the gradient:
    \begin{multline}
        0 = \nabla \MMD(P_{a^*}, P_a) \propto - \abs{2aa^T + c I}^{-\frac32} a \\+  \abs{aa^T + {a^*}{a^*}^T + c I}^{-\frac32} (1 + c) ({a^*}{a^*}^T + c I)^{-1} a
    \end{multline}
    Rearranging and isolating $a$ yields
    \begin{equation}
    \label{eq:cov_eigans}
        ({a^*}{a^*}^T + c I)^{-1} a = \frac{\abs{aa^T + {a^*}{a^*}^T + c I}^\frac32}{\abs{2aa^T + c I}^\frac32} (1 + c)^{-1} a
    \end{equation}
    The critical points are all the solutions to this linear system. These are the eigenvectors of $({a^*}{a^*}^T + c I)^{-1}$. Using Woodbury's identity:
    \begin{equation}
        \(a^*{a^*}^T + c I\)^{-1} = 
        c^{-1} I - c^{-1} (1 + c)^{-1} {a^*}{a^*}^T \nonumber
    \end{equation}
    It easy to see that one solution is $a = t \cdot a^*$. Together with equation (\ref{eq:cov_eigans}) one can conclude that $t = \pm 1$. Moreover, $a=\pm a^*$ yields a zero loss, and because MMD is non-negative we also conclude that these are global minima.
    The origin $a=0$ is a trivial solution, and by placing zero in the Hessian (Lemma \ref{lemma:gausian-cov-hessian}) it is clearly a maxima.
    All the other $d-1$ eigenvectors are in the orthogonal hyperplane of $a^*$. Plugging $a \perp a^*$ into (\ref{eq:cov_eigans}) we get after some algebra:
    \begin{equation}
            c^{-1}a = \frac{\((\norm{a}^2 + c) (1 + c)\)^\frac32}{\((2\norm{a}^2 + c)c\)^\frac32} (1 + c)^{-1}a
    \end{equation}
   For $a\neq 0$ this leads to
    \begin{equation}
        \label{eq:cov_a_norm}
        \norm{a}^2 = \frac{c\((1 + c)^\frac13 - c^\frac13 \)}{2 c^\frac13 - (1 + c)^\frac13}
    \end{equation}
    In the Supplementary Material we prove that (\ref{eq:cov_a_norm}) ensures  ${a^*}^T H(a) a^* < 0$.  Thus, $a^*$ is a descent direction and the orthogonal stationary points are not minima. 
\end{proof}

The theorem proves that the landscape of MMD with unknown covariance is benign. Figure \ref{fig:gaussian_cov_bandwidths} illustrates this in a one dimensional setting. As before, it can be seen that there are no bad local minima, yet the gradient vanishes as we move further from $a^*$.\footnote{This happens for the case $\sigma^2=0.1$ for larger $a$ values.} 

{In order to use the results from \cite{lee2016gradient}, we also need the loss to have a Lipschitz gradient. This indeed follows from Lemma \ref{lemma:cov_bounded} in the Supplementary, where we show that  the eigenvalues of the MMD Hessian are upper bounded by a constant (namely, we upper bound $\norm[2]{\nabla^2 \MMD(a)}$), and therefore the gradient is Lipschitz. Thus we conclude that in this MMD case, GD will not converge to saddle points or bad local minima.}

\begin{figure}[t]
\centerline{
\includegraphics[width=.5\textwidth]{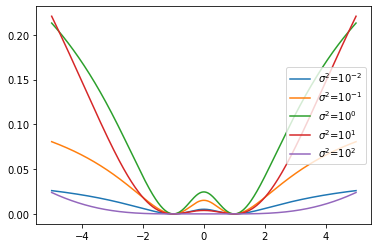}
}
\caption{MMD loss of Gaussian over $\R$ with unknown covariance, different widths $\sigma^2$ and $a^*=1$. }
\label{fig:gaussian_cov_bandwidths}
\end{figure}

\subsection{Mixture of Two Gaussians with Unknown Mean}
\label{section:gmm-mean}
We next study Gaussian mixture models with symmetric unknown means. MLE optimization in general GMMs is known to be non-convex and NP-hard \cite{aloise2009np}. Three component GMMs can be hard even for MLE with infinitely many samples \cite{jin2016local}. The symmetric case is much easier and can be optimized efficiently using an Expectation Maximization algorithm \cite{daskalakis2017ten}. In terms of likelihood-free estimators, GANs are known to fail in GMMs and special purpose discriminators are required \cite{farnia2020gat}. 

The following theorem shows that for symmetric Gaussian mixtures, the MMD landscape is in fact benign.

\begin{theorem} \label{theorem:gmm-mean-landscape}
Consider the following family of distributions where $x\in\reals^d$ 
\begin{equation}
    p(x) = 0.5 \cdot \Normal(x;\mu,\Sigma) + 0.5 \cdot \Normal(x;-\mu,\Sigma)
\end{equation}
Let $\mu^* \ne 0$ be an unknown parameter and $\Sigma\in\reals^{d\times d}$ a known covariance matrix. Then,
\begin{multline}
    \MMD(P_{\mu^*}, P_\mu) = \\\frac12 \frac{\sigma^d}{\sqrt{\abs{2 \Sigma + \sigma^2 I}}} \Big[ 
        e^{-\frac12 (2\mu)^T (2\Sigma + \sigma^2 I)^{-1} (2\mu)} + 1
        \\+ e^{-\frac12 (2\mu^*)^T (2\Sigma + \sigma^2 I)^{-1} (2\mu^*)} + 1
        \\-  2 \cdot e^{-\frac12 (\mu - \mu^*)^T (2\Sigma + \sigma^2 I)^{-1} (\mu - \mu^*)}
        \\- 2 \cdot e^{-\frac12 (\mu + \mu^*)^T (2\Sigma + \sigma^2 I)^{-1} (\mu + \mu^*)}
    \Big]
\end{multline}

The function $\MMD(P_{\mu^*}, P_\mu)$ has global minima in $\pm\mu^*$, a global maximum in $0$, and the other stationary points are
\begin{equation}
 \left\{ \mu : \begin{array}{l}
      \mu ^T(2\Sigma + \sigma^2 I)^{-1} \mu^* = 0  \\
      \mu^T (2\Sigma + \sigma^2 I)^{-1} \mu = {\mu^*}^T (2\Sigma + \sigma^2 I)^{-1} \mu^* 
 \end{array}\right\}   
\end{equation}
and are all strict saddles.
\end{theorem}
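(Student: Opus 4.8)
The plan is to first obtain the closed form exactly as in the proofs of Theorems~\ref{theorem:gaussian-mean-equation} and~\ref{theorem:cov-gaussian-landscape}, then reduce the landscape question to a rotationally symmetric one by a linear change of variables, and finally enumerate the stationary points and check their Hessians. For the \textbf{closed form}, I would write a draw from $P_\mu$ as $X=s\mu+Z$ with $s$ a symmetric sign and $Z\sim\Normal(0,\Sigma)$ independent, and likewise $Y=s^*\mu^*+Z^*$. Then $X-X'=(s-s')\mu+(Z-Z')$ with $Z-Z'\sim\Normal(0,2\Sigma)$ and $s-s'\in\{-2,0,2\}$ with probabilities $\{\tfrac14,\tfrac12,\tfrac14\}$, while $X-Y=s\mu-s^*\mu^*+(Z-Z^*)$ takes the four equiprobable mean values $\pm(\mu-\mu^*)$ and $\pm(\mu+\mu^*)$. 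Averaging the three expectations in (\ref{mmd_def}) over the sign variables and invoking the integral identity (\ref{integral}) with covariance $2\Sigma$ (so that $\frac1{\sigma^2}(2\Sigma)+I=\frac1{\sigma^2}(2\Sigma+\sigma^2I)$ and the quadratic form in the exponent becomes $(2\Sigma+\sigma^2I)^{-1}$) produces the stated formula: $\E[k(X,X')]$ and $\E[k(Y,Y')]$ contribute the two ``$+1$'' terms together with the $(2\mu)$ and $(2\mu^*)$ terms, and $-2\,\E[k(X,Y)]$ contributes the $(\mu-\mu^*)$ and $(\mu+\mu^*)$ terms.

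For the \textbf{reduction}, set $A:=(2\Sigma+\sigma^2I)^{-1}\succ 0$. Discarding the positive prefactor and the $\mu$-independent constants, minimizing $\MMD$ is equivalent to minimizing $g(\mu)=e^{-2\mu^TA\mu}-2e^{-\frac12(\mu-\mu^*)^TA(\mu-\mu^*)}-2e^{-\frac12(\mu+\mu^*)^TA(\mu+\mu^*)}$. I would then substitute $\nu=A^{1/2}\mu$, $\nu^*=A^{1/2}\mu^*\ne 0$, $r:=\norm{\nu^*}$; since $A^{1/2}$ is symmetric and invertible this is a congruence, so it preserves stationary points, the inertia of the Hessian (Sylvester's law), and global minimality, and it sends the two defining equations of the statement to $\nu^T\nu^*=0$ and a condition $\norm{\nu}^2=\text{const}$. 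What remains is to analyze $\tilde g(\nu)=e^{-2\norm{\nu}^2}-2e^{-\frac12\norm{\nu-\nu^*}^2}-2e^{-\frac12\norm{\nu+\nu^*}^2}$.

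For the \textbf{stationary points}, rotate so that $\nu^*=re_1$ and write $\nu=te_1+w$ with $w\perp e_1$. The gradient decouples: $\nabla_w\tilde g=w\bigl(-4e^{-2\norm{\nu}^2}+2e^{-\frac12\norm{\nu-\nu^*}^2}+2e^{-\frac12\norm{\nu+\nu^*}^2}\bigr)$, so at a critical point either $w=0$ or the identity $(\mathrm I)$, namely $2e^{-2\norm{\nu}^2}=e^{-\frac12\norm{\nu-\nu^*}^2}+e^{-\frac12\norm{\nu+\nu^*}^2}$, holds. If $w\ne 0$, plugging $(\mathrm I)$ into $\partial_t\tilde g=0$ collapses it to $e^{-\frac12\norm{\nu-\nu^*}^2}=e^{-\frac12\norm{\nu+\nu^*}^2}$, hence $(t+r)^2=(t-r)^2$ and $t=0$; then $(\mathrm I)$ fixes $\norm{w}^2$ explicitly, giving the sphere of stationary points orthogonal to $\mu^*$. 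If $w=0$ we are on the line $\R e_1$, where $\tilde g$ restricts to the even function $\phi(t)=e^{-2t^2}-4e^{-r^2/2}e^{-t^2/2}\cosh(tr)$; substitution shows $t\in\{0,\pm r\}$ are critical, and a one-dimensional argument rules out any others. This should account for all critical points: $\pm\mu^*$, the origin, and the orthogonal sphere.

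For the \textbf{classification}, at $\nu=\pm\nu^*$ one checks $\MMD=0$, so since $\MMD\ge 0$ these are global minima. At $\nu=0$ the Hessian is diagonal by parity of $\tilde g$, with entries $-4+4e^{-r^2/2}(1-r^2)<0$ along $e_1$ and $-4+4e^{-r^2/2}<0$ in the orthogonal directions, so the origin is a strict local maximum, in particular a strict saddle in the sense defined above. At an orthogonal critical point all three exponential factors of $\tilde g$ are equal --- this is exactly $(\mathrm I)$ with $t=0$ --- which keeps the second-order algebra tractable: working in the plane spanned by $\nu^*$ and the point, the curvature in the $\nu^*$ direction is a negative multiple of $r^2$ while the curvature along the point itself is a positive multiple of $r^2$, so these are strict saddles and moving toward $\pm\mu^*$ is a descent direction. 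Together this shows every critical point is a global minimum or has a strictly negative Hessian eigenvalue, giving the strict-saddle property and hence convergence of gradient descent to $\pm\mu^*$. The main obstacle is the one-dimensional enumeration: ruling out spurious critical points of $\phi$ amounts to comparing sums of terms of the form $(\text{polynomial})\cdot(\text{Gaussian})$ with different bandwidths on $(0,r)$ and on $(r,\infty)$, which does not reduce to an algebraic identity and requires careful monotonicity bounds; the Hessian computations at the origin and at the orthogonal saddles are routine once one exploits that the three exponentials coincide there.
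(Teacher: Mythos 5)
Your proposal follows essentially the same route as the paper: closed form via the sign decomposition and the Gaussian integral (Lemma \ref{lemma:chi_square}), a whitening change of variables, a split of the critical points into those on the line $\R\mu^*$ and those where the coefficients of $\mu$ and $\mu^*$ in the gradient vanish, and Hessian checks at the origin and at the orthogonal points. Where you diverge, you are actually on firmer ground: whitening by $(2\Sigma+\sigma^2I)^{-1/2}$ is the correct reduction (the paper's $\Sigma^{-1/2}$ does not literally isotropize the quadratic form), and your treatment of the orthogonal critical points via the equality of the three exponential weights is clean and correct --- there the Hessian is a positive multiple of $12\nu\nu^T-4\nu^*{\nu^*}^T$, negative along $\nu^*$ and positive along $\nu$, so they are strict saddles. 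However, you should write out the radius your identity (I) actually forces: with $t=0$ it reads $2\norm{w}^2=\tfrac12(\norm{w}^2+r^2)$, i.e.\ $\norm{w}^2=r^2/3$, equivalently $\mu^T(2\Sigma+\sigma^2I)^{-1}\mu=\tfrac13\,{\mu^*}^T(2\Sigma+\sigma^2I)^{-1}\mu^*$. This is \emph{not} the second defining equation of the theorem: the vanishing of the coefficient of $\mu$ forces equality of the exponentials, not the paper's relation $\norm{\mu-\mu^*}^2+\norm{\mu+\mu^*}^2=4\norm{\mu^*}^2$, and indeed points with $\mu^T A\mu^*=0$, $\mu^TA\mu={\mu^*}^TA\mu^*$ are not stationary (the radial gradient component $-4e^{-2r^2}+4e^{-r^2}$ is nonzero). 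So your hedge ``(I) fixes $\norm{w}^2$'' hides a real discrepancy: carried through, your argument proves the saddle sphere at the corrected radius (the strict-saddle conclusion is unaffected, since it only uses orthogonality and equal weights), not the set as stated, and you should say so explicitly.

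The genuine gap is the one you flag yourself: the enumeration on the axis. The theorem asserts that $\pm\mu^*$, $0$ and the orthogonal sphere are \emph{all} the stationary points, and the benign-landscape conclusion needs exactly this; your case $w=0$ only verifies that $t\in\{0,\pm r\}$ are critical and defers uniqueness to unproven ``monotonicity bounds''. The paper closes this step by a sign analysis of the scalar derivative $g(t)=(t-1)\alpha^{(t-1)^2}+(t+1)\alpha^{(t+1)^2}-2t\alpha^{4t^2}$ with $\alpha=e^{-\frac12c^{-1}\norm{\mu^*}^2}\in(0,1)$, claiming $g<0$ on $(-\infty,-1)\cup(0,1)$ and $g>0$ on $(-1,0)\cup(1,\infty)$; to complete your proof you must establish this (equivalently, $\phi'(t)<0$ on $(0,r)$ and $\phi'(t)>0$ on $(r,\infty)$, plus oddness) for every bandwidth, and this is precisely the delicate regime the paper's Figure \ref{fig:gmm_mmd} illustrates with near-flat inflection regions. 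Until that is done, your proposal does not rule out extra on-axis critical points, so the global claim is incomplete. A minor further point: at the origin your parity argument correctly gives a strict local maximum; the ``global maximum'' wording of the statement is neither proved by your computation nor needed for the strict-saddle property.
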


\begin{proof}
    By defining $\mu := \Sigma^{-\frac12} \mu$, we can consider w.l.o.g the case of $\Sigma = I$. Define $c = 2 + \sigma^2$. To analyze the landscape, we differentiate MMD and compare to zero:
    \begin{multline}
        0  = \nabla \MMD(P_{\mu^*}, P_\mu) \propto 
            e^{-\frac12 c^{-1} \norm{\mu - \mu^*}^2} (\mu - \mu^*)
            \\+ e^{-\frac12 c^{-1} \norm{\mu + \mu^*}^2} (\mu + \mu^*)
            - 2 e^{-\frac12 c^{-1} \norm{2\mu}^2} \mu
    \end{multline}
    which holds when either (A) $\mu = t \cdot \mu^*$ for some $t \in \R$; or (B) when the coefficients vanish.
    \\\\
    In case (A), denoting $\alpha = e^{-\frac12 c^{-1} \norm{\mu^*}^2}$, the derivative as a function of $t$ is given by
    \begin{multline}
        \nabla \MMD(P_{\mu^*}, P_{t \cdot \mu^*}) \propto 
            \\(t-1) \cdot \alpha^{(t-1)^2} + (t+1) \cdot \alpha^{(t+1)^2} - 2t \cdot \alpha^{4t^2}
    \end{multline}
    which is zero in $t \in \{-1,0,1\}$, where by exploration of ascent and descent areas $\pm 1$ are minima, and $0$ is a local maximum:
    $g(t) < 0$ for $t \in (-\infty,-1)\cup(0,1)$, $g(t) > 0$ for $t \in (-1,0)\cup(1,\infty)$ and $g(t)=0$ for $t \in \{-1,0,1\}$.
    \\\\
    In case (B), the coefficients vanish when
    \begin{align}
        &\norm{\mu - \mu^*}^2 = \norm{\mu + \mu^*}^2\nonumber
\end{align}
and 
\begin{align}
        &\norm{\mu - \mu^*}^2 + \norm{\mu + \mu^*}^2 = 4 \norm{\mu^*}^2
    \end{align}
    Thus,
    \begin{equation}\label{orthcond}
        \mu^T \mu^* = 0,\qquad \norm{\mu^*}^2 = \norm{\mu}^2
    \end{equation}
    To characterize $\mu$ in this case, we compute the Hessian and plugging (\ref{orthcond}) yields:
    \begin{multline}
        H = 2 \( e^{-c^{-1} \norm{\mu^*}^2} - e^{-2 c^{-1} \norm{\mu^*}^2} \) c^{-1} I 
        \\- 2 e^{- c^{-1} \norm{\mu^*}^2} c^{-2} [\mu \mu^T + \mu^* {\mu^*}^T] \\
        + 8 e^{-2 c^{-1} \norm{\mu^*}^2} c^{-2} \mu \mu^T
        \label{eq:hessian_mixture_main}
    \end{multline}
    Then, it is easy to see that $\mu^*$ is an eigenvector and its corresponding eigenvalue is negative. The full details are available in the Supplementary Material.
\end{proof}

The benign landscape of MMD in a symmetric GMM is illustrated in Figure \ref{fig:gmm_mmd}. As before, MMD suffers from vanishing gradient far from the origin. With poor width choices, there are also troubling deflection points which have small yet non-zero gradients. By tuning the width parameter accordingly, we can upper bound the Hessian (Supplementary Material, Lemma \ref{lemma:mean_bounded}). {As in Section \ref{section:gaussian-cov}, it follows we can use the results in \cite{lee2016gradient}, and conclude that GD will not converge to saddle points or bad local minima.}

\begin{figure}[t]
\centerline{\includegraphics[width=.5\textwidth]{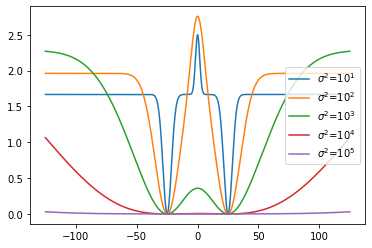}}
\caption{MMD loss of GMM over $\R$ with different widths $\sigma^2$ and $\mu^* = 25$. Note that in $\sigma^2 = 10^1$ around $\mu=\pm 8.5$ there are inflection points. The derivatives at these points are small but non-zero.
}
\label{fig:gmm_mmd}
\end{figure}

\section{Experiments}
The results above show that MMD is expected to converge globally for the three distributions we consider. As mentioned earlier, MLE optimization also has such guarantees (except for the singular covariance case). However, no such results are available for WGAN. Here we report empirical evaluations that are in line with these observations. In all cases we denote by $y_1,\ldots,y_m$ the sample from the true distribution (with parameters $\theta^*$).

We consider the following approaches to learning the generative models we analyzed: \begin{itemize}
    \item MMD: This is the standard implementation of the MMD approach. Here, at each iteration a ``fake'' sample $x_1,\ldots,x_n$ is generated from the current parameter estimate $\theta$ and the following finite-sample version of the MMD loss (see \eqref{mmd_def}) is optimized:\footnote{The bandwidth is chosen via holdout from range of powers of ten.}
    \begin{multline}\label{finiteMMD}
        \MMD_{m,n} = \frac1{n \cdot (n-1)} \sum_{i \ne j} k(x_i,x_j) \\+ \frac1{m \cdot (m-1)} \sum_{i \ne j} k(y_i, y_j) - \frac2{n \cdot m} \sum_{i,j} k(x_i, y_j)
    \end{multline}
    
    \item One-Sided MMD (OSMMD): Because we consider Gaussian distributions it is possible to take the limit of $m\to\infty$ and compute the integrals in closed form (this is essentially what we do in the analysis of the landscape). Thus, OSMDD is defined as in (\ref{finiteMMD}) but with $n=\infty$ via an analytically computed expectation.
    \item Maximum Likelihood Estimation (MLE): Here we maximize the standard log-likelihood function.
    \item Wasserstein GAN (WGAN): This is the standard WGAN implementation where generator parameters and discriminator parameters are trained via gradient descent-ascent. The discriminator is structured as linear in the Gaussian mean experiment and as a two layers with ReLU activation function in the other experiments. The hidden dimension is chosen to be the lowest power of two which is greater than the problem dimension. 
\end{itemize}

Implementation details: All experiments run on Pytorch using Adam with learning rate $10^{-1}$. All the algorithms use a full batch of real samples. MMD and WGAN use $n=m$ fake samples which are independently generated in each epoch.

\subsection{Gaussian with Unknown Mean}
We first consider the estimation of an unknown mean as in Sections \ref{section:gaussian-mean} and \ref{section:gmm-mean}. In each simulation, we compute the distance  $\norm{\mu - \mu^*} / d$ and report the number of times this distance is less than $0.02$. 

\begin{figure}[t]
\centerline{\includegraphics[width=.4\textwidth]{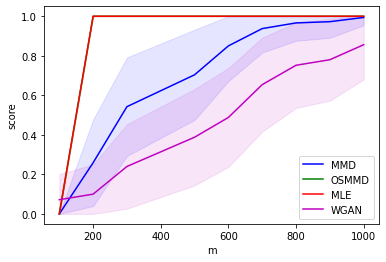}}
\caption{Success rate in a Gaussian model with unknown mean as a function of $m$. OSMMD and MLE coincide.} 
\label{fig:gaussian_mean_resualts}
\end{figure}

\begin{figure}[t]
\centerline{\includegraphics[width=.4\textwidth]{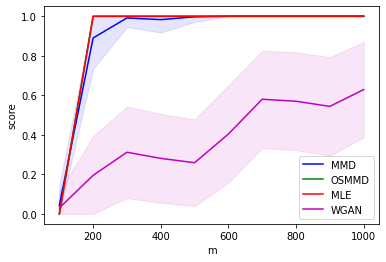}}
\caption{Success rate in a 2-GMM model with unknown mean as a function of $m$. OSMMD and MLE coincide.} 
\label{fig:gmm_mean_resualts}
\end{figure}

Results are shown in Figure \ref{fig:gaussian_mean_resualts}. As expected, MLE easily succeeds to recover the unknown parameter.
Among the implicit likelihood-free generators, MMD significantly outperforms WGAN, apparently because of optimization failure of the latter. OSMMD coincides with MLE and suggests that the gap to optimality can be eliminated by increasing $n$. Similar results and conclusions are provided in Figure \ref{fig:gmm_mean_resualts} for a symmetric GMM with unknown mean. 

\subsection{Gaussian with Unknown Covariance}
We next consider the estimation of an unknown near-singular covariance $a^* {a^*}^T + \varepsilon^2 I$. This is more representative of deep neural nets which model low-dimensional data manifolds. Furthermore, for the case $\epsilon=0$ the MLE estimator does not exist, and thus it is of particular interest to understand other estimators.

In the figures, we measure the accuracy of a parameter estimation method as the fraction of cases where
 $\|aa^T - {a^*}{a^*}^T\| / d \leq \gamma$ where $\gamma$ is a threshold and $\gamma=0.5 \cdot 10^{-1}$ for the single Gaussian case and $\gamma = 1 \cdot 10^{-1}$ for the GMM.

We begin by verifying the intuition that MLE’s success depends on the singularity parameter $\varepsilon$. In Figure \ref{fig:cov_likelihood_vs_mmd} we compare the performance of MMD and MLE as parameter of $\varepsilon$. Theoretically, MLE does not exist for $\varepsilon=0$ and the graph clearly shows that it fails to recover the unknown covariance also in the near-singular case. Both algorithms use the same  number of iterations and learning rate.  

Figure \ref{fig:gaussian_cov_resualts} shows the performance of the baselines for the covariance estimation problem with zero mean. MLE fails in all the experiments due to the small $\varepsilon=1\cdot10^{-5}$. OSMMD is significantly better than all the methods, and the more realistic MMD is second best. WGAN performs worse than the MMD baselines and also requires significant hyperparameter tuning.

We next consider the case of unknown covariance, but with a mixture of two Gaussians. Namely the model:
\begin{equation}\label{eq:gmm-cov}
    p(x) = 0.5 \sum_{z\in\{-1,+1\}}\Normal(x;z \mu,a^* {a^*}^T + \varepsilon^2 I) \nonumber
\end{equation}
where $\mu$ and $\varepsilon$ are known.
Figure \ref{fig:gmm_cov_resualts} shows the results for this case, and the qualitative behavior is similar to that of a single Gaussian. We note that for WGAN we explored a wide range of parameter settings (step sizes, initialization, optimization algorithms and architectures), and yet its performance could not be improved beyond that reported.

\begin{figure}[t]
\centerline{\includegraphics[width=.4\textwidth]{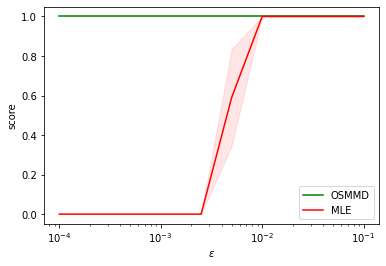}}
\caption{MLE vs MMD as a function of $\varepsilon$ for covariance estimation. See Equation \eqref{eq:cov_model}.}
\label{fig:cov_likelihood_vs_mmd}
\end{figure}

\begin{figure}[t]
\centerline{\includegraphics[width=.4\textwidth]{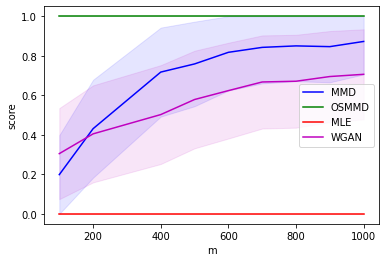}}
\caption{Success rate in a Gaussian model with unknown covariance as a function of $m$.}
\label{fig:gaussian_cov_resualts}
\end{figure}

\begin{figure}[t]
\centerline{\includegraphics[width=.5\textwidth]{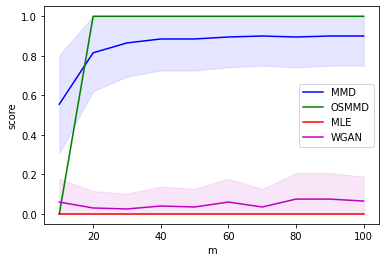}}
\caption{Success rate in a 2-GMM model with unknown covariance as a function of $m$.}
\label{fig:gmm_cov_resualts}
\end{figure}

\subsection{Linear Unmixing \label{sec:unmixing}}
We conclude the experiments with a more challenging linear unmixing setting involving continuous hidden variables \cite{bioucas2012hyperspectral,nascimento2005vertex}. Accurate evaluation of the likelihood function in such models involves a high dimensional integration, and MLE is usually computationally expensive. The underlying model is:
\begin{align}
    x_i &= Ab_i+w_i\nonumber  \ ,  \  b_i= {\mathcal{D}}{\rm{irichlet}}(1_r)\nonumber\\
    w_i&= {\mathcal{N}}(0,\sigma_w^2I)\quad i=1,\cdots n
\end{align}
where $x_i$ are the length $d$ observed vectors, $A$ is a $d \times r$ unknown deterministic matrix to be recovered, $b_i$ are hidden random vectors within the $r$ dimensional simplex and $w_i$ are length $d$ noise vectors.
Using matrix notation, this is a variant of non-negative matrix factorization (NMF): $X=AB+W$ where $X=[x_1,\cdots,x_n]$, $B=[b_1,\cdots,b_n]$ and $W=[w_1,\cdots,w_n]$.
The goal is to estimate $A$ given $X$. 

Linear unmixing is common in hyperspectral imaging where each sample is a pixel and the columns of $A$ represent different materials. Direct computation of MLE is difficult but there are many heuristics and approximations  \cite{bioucas2012hyperspectral}. For our purposes, we only compare the general purpose MMD and WGAN. We experimented with different hyperparameter settings (learning rate, bandwidth, and number of layers in the discriminator) and report the results of the best in each  model. Dimensions were $d=10$, $r=3$ and $n=100$. Both algorithms are initialized using the popular Vertex Component Analysis (VCA) algorithm \cite{nascimento2005vertex}\footnote{https://github.com/Laadr/VCA} and perform $5000$ epochs.

The accuracy of an estimate $A$ with respect to a true $A^*$ is defined using:
\begin{equation}
    d(A, A^*)=\min_{\pi\in\Pi}\sum_i\|a_i-a^*_{\pi_i}\|^2
\end{equation}
where $\Pi$ is the set of all the permutations of $(1,\cdots,r)$ since it is impossible to identify the order of the columns in $A$. Results are provided in Table \ref{unmixing_table} and show a clear advantage for MMD over WGAN.

Additional experiments (not shown) also suggest that MMD is less sensitive to hyperparameter tuning than WGAN, but both require a clever initialization such as VCA and perform poorly otherwise.

\begin{table}[]
    \centering
    \begin{tabular}{|l|l|l|l|}
    \hline
    $\sigma_w^2$  & 0.001 & 0.01 & 0.1 \\
    \hline
WGAN & 0.60 (0.20) & 0.61 (0.19) & 0.61 (0.16) \\
MMD & 0.46 (0.19) & 0.46 (0.17) & 0.50 (0.18) \\
    \hline
    \end{tabular}
    \caption{Accuracy in linear unmixing with different noise variances. See Section \ref{sec:unmixing} for details.}
    \label{unmixing_table}
\end{table}

\section{Conclusions}
Deep learning approaches typically result in non-convex optimization problems.
There are clearly cases where this can lead to bad local optima and therefore failure of the learning method. On the other hand, in practice useful models are often learned. Thus, a key challenge for 
theoretical work is to characterize when such methods are expected to work. 
Clearly this is a technical challenge, due to the non-convexity of the optimization landscape, and indeed not many such results have been proven to date. 

Here we focus on models that we believe to be building blocks of more complex cases. One case is Gaussians with low-rank covariance, which captures the
structure of low dimensional manifolds that are often present in real data. The other is mixtures of Gaussians which capture the common case of multi-modal structure. For these cases, we show that MMD can be globally optimized despite its non-convexity. Interestingly, MMD also offers an advantage over MLE in the covariance learning case.

One challenge with our theoretical results is that in some cases gradients will vanish as the parameter diverges. An interesting approach to ameliorate this is to choose a bandwidth where gradients are sufficiently large. This will result in a method closer to MMD-GAN, and it will be very interesting to explore its theoretical properties for the distributions considered here.

Another avenue for future research is to study the effect of the ``fake'' sample size on both the optimization landscape and the sample complexity. Our empirical evaluation suggests that MMD performs well for finite samples (though not as well as the infinite fake sample; namely OSMMD). It should be possible to analyze this via Morse theory \cite{mei2018landscape} which we leave for future work.

\paragraph{acknowledgements.}
This research was partially supported by ISF grant 2672/21. Also thank to Nerya Granot who supply the VCA code.

\bibliography{main.bib}

\newpage
\appendix
\onecolumn

\providecommand{\upGamma}{\Gamma}
\providecommand{\uppi}{\pi}

\section{PROOFS OF MAIN RESULTS}

\subsection{End of Proof of Theorem \ref{theorem:cov-gaussian-landscape}}
\begin{proof}
    Computing the Hessian (Lemma \ref{lemma:gausian-cov-hessian}) and dropping positive constants:
    \begin{multline}
        H = \nabla^2 \MMD(a)
        \propto (2\norm{a}^2 + c)^{-\frac32} c^{-\frac32} \( 6 \(2\norm{a}^2 + c\)^{-1} a a^T - I\) 
        \\+ (\norm{a}^2 + c)^{-\frac32} (1 + c)^{-\frac12} \Bigg( ({e_1}{e_1}^T + c I)^{-1} - 3 c^{-1} (\norm{a}^2 + c)^{-1} a a^T \Bigg)
    \end{multline}
    Finally we show that (\ref{eq:cov_a_norm}) ensures that $a^*$ is a descent direction:
    \begin{equation}
    \begin{aligned}
        {a^*}^T H a^*
        &\propto - (2\norm{a}^2 + c)^{-\frac32} c^{-\frac32} + (\norm{a}^2 + c)^{-\frac32} (1 + c)^{-\frac32} \\
        &= - \(2\frac{c\((1 + c)^\frac13 - c^\frac13 \)}{2 c^\frac13 - (1 + c)^\frac13} + c\)^{-\frac32} c^{-\frac32} + \(\frac{c\((1 + c)^\frac13 - c^\frac13 \)}{2 c^\frac13 - (1 + c)^\frac13} + c\)^{-\frac32} (1 + c)^{-\frac32} \\
        &\propto - \(2\frac{\((1 + c)^\frac13 - c^\frac13 \)}{2 c^\frac13 - (1 + c)^\frac13} + 1\)^{-\frac32} c^{-\frac32} + \(\frac{\((1 + c)^\frac13 - c^\frac13 \)}{2 c^\frac13 - (1 + c)^\frac13} + 1\)^{-\frac32} (1 + c)^{-\frac32} \\
        &= - \(\frac{(1 + c)^\frac13}{2 c^\frac13 - (1 + c)^\frac13}\)^{-\frac32} c^{-\frac32} + \(\frac{c^\frac13}{2 c^\frac13 - (1 + c)^\frac13}\)^{-\frac32} (1 + c)^{-\frac32} \\
        &= - \(\frac{(1 + c)^\frac13}{2 c^\frac13 - (1 + c)^\frac13}\)^{-\frac32} c^{-\frac32} + \(\frac{c^\frac13}{2 c^\frac13 - (1 + c)^\frac13}\)^{-\frac32} (1 + c)^{-\frac32} \\
        &= \frac{c^{-\frac12} (1 + c)^{-\frac32} - (1 + c)^{-\frac12} c^{-\frac32}}{\(2 c^\frac13 - (1 + c)^\frac13\)^{-\frac32}}
        = \frac{(c (1 + c)^3)^{-\frac12} - ((1 + c) c^3)^{-\frac12}}{\(2 c^\frac13 - (1 + c)^\frac13\)^{-\frac32}} < 0
    \end{aligned}
    \end{equation}
\end{proof}

\vfill

\subsection{End of Proof of Theorem \ref{theorem:gmm-mean-landscape}}
\begin{proof}
     In case (B), $\norm{\mu - \mu^*}^2 = \norm{\mu + \mu^*}^2$, and by plugging in the equation again:
    \begin{equation}
        \norm{\mu - \mu^*}^2 + \norm{\mu + \mu^*}^2 = 4 \norm{\mu^*}^2
    \end{equation}
    which means
    \begin{equation}\label{orthcond2}
        \mu^T \mu^* = 0,\qquad \norm{\mu^*}^2 = \norm{\mu}^2
    \end{equation}
    To characterize $\mu$ in this case, we compute the Hessian:
    \begin{multline}
        H = \nabla^2 \MMD(P_{\mu^*}, P_\mu) =
        e^{-\frac12 c^{-1} \norm{\mu - \mu^*}^2} c^{-1} I - e^{-\frac12 c^{-1} \norm{\mu - \mu^*}^2} c^{-2} (\mu - \mu^*) (\mu - \mu^*)^T
        \\+ e^{-\frac12 c^{-1} \norm{\mu + \mu^*}^2} c^{-1} I - e^{-\frac12 c^{-1} \norm{\mu+ \mu^*}^2} c^{-2} (\mu + \mu^*) (\mu + \mu^*)^T
        - 2 e^{-2 c^{-1} \norm{\mu}^2}  c^{-1} I + 8 e^{-2 c^{-1} \norm{\mu}^2} c^{-2} \mu \mu^T
    \end{multline}
    Using (\ref{orthcond2}) yields:
    \begin{equation}
        H = 2 \( e^{-c^{-1} \norm{\mu^*}^2} - e^{-2 c^{-1} \norm{\mu^*}^2} \) c^{-1} I 
        - 2 e^{- c^{-1} \norm{\mu^*}^2} c^{-2} [\mu \mu^T + \mu^* {\mu^*}^T]
        + 8 e^{-2 c^{-1} \norm{\mu^*}^2} c^{-2} \mu \mu^T
    \end{equation}
    Then, it is easy to see that $\mu^*$ is an eigenvector and its corresponding eigenvalue is negative:
    \begin{align}
        \mu^* H \mu^* &= 2 \( e^{-c^{-1} \norm{\mu^*}^2} - e^{-2 c^{-1} \norm{\mu^*}^2} \)  c^{-1} \norm{\mu^*}^2 
        - 2 e^{- c^{-1} \norm{\mu^*}^2} c^{-2} \norm{\mu^*}^4 \\
        &= 2 e^{-c^{-1} \norm{\mu^*}^2} \( 1 - e^{-c^{-1} \norm{\mu^*}^2}   
        - c^{-1} \norm{\mu^*}^2
        \) c^{-1} \norm{\mu^*}^2 < 0
    \end{align}
    Which holds by Taylor and $\mu^* \ne 0$.
\end{proof}

\newpage

\section{LEMMAS}
\begin{lemma} \label{lemma:chi_square} Let $\mu \in \R^d$ and $\Sigma \in \R^{d \times d}$ a positive definite matrix, then:
\begin{equation}
    \int_{\R^d} e^{-\frac12\frac1{\sigma^2} \norm{z}^2} \cdot \Normal(z;\mu,\Sigma) dz
        = \abs{\frac1{\sigma^2}\Sigma + I}^{-\frac12} e^{-\frac12 \frac1{\sigma^2} \mu^T (\frac1{\sigma^2}\Sigma + I)^{-1} \mu}
\end{equation}
\end{lemma}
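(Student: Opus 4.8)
The plan is to recognize the integrand as a constant multiple of a product of two Gaussian densities and reduce the claim to the standard Gaussian convolution identity. First I would write $e^{-\frac12\frac1{\sigma^2}\norm{z}^2} = (2\pi\sigma^2)^{d/2}\,\Normal(z;0,\sigma^2 I)$, so that the left-hand side becomes $(2\pi\sigma^2)^{d/2}\int_{\R^d}\Normal(z;0,\sigma^2 I)\,\Normal(z;\mu,\Sigma)\,dz$.

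Next I would invoke the fact that the product of two Gaussian densities integrates to a Gaussian density evaluated at the difference of the means with the sum of the covariances, namely $\int_{\R^d}\Normal(z;0,\sigma^2 I)\,\Normal(z;\mu,\Sigma)\,dz = \Normal(\mu;0,\sigma^2 I + \Sigma)$. To keep the argument self-contained I would instead complete the square in the combined exponent $-\frac12\frac1{\sigma^2}\norm{z}^2 - \frac12(z-\mu)^T\Sigma^{-1}(z-\mu)$: its quadratic part has matrix $A := \frac1{\sigma^2}I + \Sigma^{-1}$, the $z$-integral then contributes $(2\pi)^{d/2}\abs{A}^{-1/2}$, and the leftover $\mu$-dependent constant is $\frac12\mu^T(\Sigma^{-1}A^{-1}\Sigma^{-1} - \Sigma^{-1})\mu$.

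Finally I would collect the constants. Combining the prefactor with the normalizer $\frac1{(2\pi)^{d/2}\abs{\Sigma}^{1/2}}$ of $\Normal(z;\mu,\Sigma)$ and the $\abs{A}^{-1/2}$ from the Gaussian integral gives $(2\pi\sigma^2)^{d/2}(2\pi)^{-d/2}\abs{A\Sigma}^{-1/2} = \sigma^d\abs{\frac1{\sigma^2}\Sigma + I}^{-1/2} = \abs{\frac1{\sigma^2}\Sigma + I}^{-1/2}$, using $A\Sigma = \frac1{\sigma^2}\Sigma + I$. For the exponent I would use the one-line identity $\Sigma^{-1}A^{-1}\Sigma^{-1} - \Sigma^{-1} = \Sigma^{-1}A^{-1}(\Sigma^{-1} - A) = -\frac1{\sigma^2}\Sigma^{-1}A^{-1} = -\frac1{\sigma^2}(A\Sigma)^{-1} = -\frac1{\sigma^2}(\frac1{\sigma^2}\Sigma + I)^{-1}$, so the constant term is exactly $-\frac12\frac1{\sigma^2}\mu^T(\frac1{\sigma^2}\Sigma + I)^{-1}\mu$, matching the claimed formula.

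There is no serious obstacle here: the content is a routine Gaussian integral. The only things to be careful about are the bookkeeping of the $(2\pi)$ and $\sigma^2$ factors inside the determinant, and the small matrix manipulation showing $\Sigma^{-1}A^{-1}\Sigma^{-1} - \Sigma^{-1} = -\frac1{\sigma^2}(\frac1{\sigma^2}\Sigma + I)^{-1}$ displayed above; positive definiteness of $\Sigma$ (hence of $A$) guarantees all inverses exist and the integral converges.
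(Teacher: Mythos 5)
Your approach is in substance the same as the paper's: complete the square in $z$, evaluate the resulting Gaussian integral, and simplify the quadratic form in $\mu$ (where the paper invokes the Woodbury identity, you use the equivalent one-line factorization $\Sigma^{-1}A^{-1}\Sigma^{-1}-\Sigma^{-1}=\Sigma^{-1}A^{-1}(\Sigma^{-1}-A)=-\frac1{\sigma^2}(A\Sigma)^{-1}$, which is correct and arguably cleaner). The exponent computation is right. However, your collection of constants as written contains a false equality: $(2\pi\sigma^2)^{d/2}(2\pi)^{-d/2}\abs{A\Sigma}^{-1/2}=\sigma^d\abs{\frac1{\sigma^2}\Sigma+I}^{-1/2}$, and this is \emph{not} equal to $\abs{\frac1{\sigma^2}\Sigma+I}^{-1/2}$ unless $\sigma=1$. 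The slip comes from mixing two accountings: if you pull out $(2\pi\sigma^2)^{d/2}$ by writing $e^{-\frac1{2\sigma^2}\norm{z}^2}=(2\pi\sigma^2)^{d/2}\Normal(z;0,\sigma^2 I)$, you must also retain the $(2\pi\sigma^2)^{-d/2}$ normalizer of $\Normal(z;0,\sigma^2 I)$ when integrating the product (equivalently, in the convolution route the relevant determinant is $\abs{\sigma^2 I+\Sigma}^{-1/2}=\sigma^{-d}\abs{\frac1{\sigma^2}\Sigma+I}^{-1/2}$, and that $\sigma^{-d}$ is precisely what cancels your $\sigma^d$); alternatively, in the self-contained complete-the-square route no factor $(2\pi\sigma^2)^{d/2}$ ever enters, and the prefactor is simply $(2\pi)^{-d/2}\abs{\Sigma}^{-1/2}\cdot(2\pi)^{d/2}\abs{A}^{-1/2}=\abs{A\Sigma}^{-1/2}=\abs{\frac1{\sigma^2}\Sigma+I}^{-1/2}$. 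Done consistently either way, your method yields exactly the stated formula, so this is a repairable bookkeeping error (ironically, in the very place you flagged as needing care) rather than a conceptual gap.
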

\subsection{Proof of Lemma \ref{lemma:chi_square}}
\begin{proof}
    \begin{align*}
        \int_{\R^d} e^{-\frac12\frac1{\sigma^2} \norm{z}^2} \cdot \Normal(z;\mu,\Sigma) dz
        &= \int_{\R^d} \frac1{\sqrt{(2\pi)^d}} \frac1{\abs{\Sigma}^\frac12} e^{-\frac12 \frac1{\sigma^2} \norm{z}^2 } e^{-\frac12 (z - \mu)^T \Sigma^{-1} (z - \mu)} dz
        \\
        &= \frac1{\abs{\Sigma}^\frac12} e^{-\frac12 \frac1{\sigma^2} \mu^T \Sigma^{-1} \mu} \cdot \int_{\R^d} \frac1{\sqrt{(2\pi)^d}} e^{-\frac12 [z^T (\Sigma^{-1} + \frac1{\sigma^2}I) z -2 \frac1{\sigma^2} \mu^T \Sigma^{-1} z]} dz \\
        &\overset{*}= \frac1{\abs{\Sigma}^\frac12} e^{-\frac12 \frac1{\sigma^2} \mu^T \Sigma^{-1} \mu} \cdot \frac1{\abs{\Sigma^{-1} + \frac1{\sigma^2}I}^\frac12} e^{+\frac12 \mu^T \Sigma^{-1} (\Sigma^{-1} + \frac1{\sigma^2}I)^{-1} \Sigma^{-1} \mu} \\
        &= \frac1{\abs{\frac1{\sigma^2}\Sigma + I}^\frac12} e^{-\frac12 \frac1{\sigma^2} \mu^T [\Sigma^{-1} - \Sigma^{-1} (\Sigma^{-1} + \frac1{\sigma^2}I)^{-1} \Sigma^{-1}] \mu} \\
        &\overset{**}= \frac1{\abs{\frac1{\sigma^2}\Sigma + I}^\frac12} e^{-\frac12 \mu^T (\Sigma + \sigma^2 I)^{-1} \mu} = \frac1{\abs{\frac1{\sigma^2}\Sigma + I}^\frac12} e^{-\frac12 \frac1{\sigma^2} \mu^T (\frac1{\sigma^2}\Sigma + I)^{-1} \mu}
    \end{align*}
    $(*)$ Integral over pdf of normal variable with $(\Sigma^{-1} + \frac1{\sigma^2}I)^{-1} \Sigma^{-1} \mu$ mean and $(\Sigma^{-1} + \frac1{\sigma^2}I)$ covariance. \\
    $(**)$ Woodbury matrix identity.
\end{proof}

\begin{lemma}
\label{lemma:gauusian-cov-gradient}
The gradient of the MMD loss is given by:
\begin{multline}
    \nabla_a \MMD(P_{a^*}, P_a) = -2 \abs{\frac2{\sigma^2} (aa^T + \varepsilon^2I) + I}^{-\frac32} \(\frac1{\sigma^2}\)^d (2 \varepsilon^2 + \sigma^2)^{d-1} a \\+ 2 \abs{\frac1{\sigma^2} (aa^T + {a^*}{a^*}^T + 2\varepsilon^2I) + I}^{-\frac32} \(\frac1{\sigma^2}\)^d (1 + 2\varepsilon^2 + \sigma^2) (2\varepsilon^2 + \sigma^2)^{d-1} ({a^*}{a^*}^T + (2\varepsilon^2 + \sigma^2) I)^{-1} a
\end{multline}
\end{lemma}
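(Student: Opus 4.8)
The plan is to differentiate the closed form $(\ref{eq:mmd-cov})$ term by term. The middle summand $\abs{\frac1{\sigma^2}(2a^*{a^*}^T + 2\varepsilon^2 I) + I}^{-1/2}$ is independent of $a$, so it drops out. Abbreviating $A := \frac2{\sigma^2}(aa^T + \varepsilon^2 I) + I$ and $C := \frac1{\sigma^2}(aa^T + a^*{a^*}^T + 2\varepsilon^2 I) + I$, we have $\MMD(P_{a^*},P_a) = \abs{A}^{-1/2} + (\text{const}) - 2\abs{C}^{-1/2}$, with both $A$ and $C$ positive definite. I would use the elementary matrix-calculus identity $\partial_{a_i}\abs{M}^{-1/2} = -\tfrac12\abs{M}^{-1/2}\,\mathrm{tr}(M^{-1}\partial_{a_i}M)$ together with $\partial_{a_i}(aa^T) = e_i a^T + a e_i^T$. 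Since $M^{-1}$ is symmetric, $\mathrm{tr}(M^{-1}(e_i a^T + a e_i^T)) = 2(M^{-1}a)_i$, so $\nabla_a\abs{A}^{-1/2} = -\tfrac2{\sigma^2}\abs{A}^{-1/2}A^{-1}a$ and $\nabla_a\abs{C}^{-1/2} = -\tfrac1{\sigma^2}\abs{C}^{-1/2}C^{-1}a$.

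It then remains to put $\abs{A}^{-1/2}A^{-1}a$ and $\abs{C}^{-1/2}C^{-1}a$ into the stated rank-one form. Writing $c_0 := 1 + \tfrac{2\varepsilon^2}{\sigma^2}$ and $c := 2\varepsilon^2 + \sigma^2 = \sigma^2 c_0$, the first case is easy: $a$ is an eigenvector of $A = \tfrac2{\sigma^2}aa^T + c_0 I$ with eigenvalue $\lambda_A = \tfrac2{\sigma^2}\norm{a}^2 + c_0$, while the remaining $d-1$ eigenvalues all equal $c_0$, so $A^{-1}a = \lambda_A^{-1}a$ and $\abs{A} = \lambda_A c_0^{d-1}$, whence $\abs{A}^{-1/2}A^{-1}a = \lambda_A^{-3/2}c_0^{-(d-1)/2}a = c_0^{d-1}\abs{A}^{-3/2}a$. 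For $C$ the vector $a$ is no longer an eigenvector, so I would instead write $C = B + \tfrac1{\sigma^2}aa^T$ with $B := \tfrac1{\sigma^2}(a^*{a^*}^T + cI)$, apply Sherman--Morrison to get $C^{-1}a = B^{-1}a/(1 + \tfrac1{\sigma^2}a^T B^{-1}a)$, and use the matrix determinant lemma $\abs{C} = \abs{B}(1 + \tfrac1{\sigma^2}a^T B^{-1}a)$ to recognize $1 + \tfrac1{\sigma^2}a^T B^{-1}a = \abs{C}/\abs{B}$. This collapses the expression to $\abs{C}^{-1/2}C^{-1}a = \abs{B}\,\abs{C}^{-3/2}B^{-1}a$; substituting $B^{-1} = \sigma^2(a^*{a^*}^T + cI)^{-1}$ and $\abs{B} = (1/\sigma^2)^d c^{d-1}(c + \norm{a^*}^2)$ brings this into the claimed form.

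Finally, assembling $\nabla_a\MMD = \nabla_a\abs{A}^{-1/2} - 2\nabla_a\abs{C}^{-1/2}$ and collecting prefactors — using $c_0^{d-1} = (1/\sigma^2)^{d-1}(2\varepsilon^2+\sigma^2)^{d-1}$ for the $A$-term and the factor $(1/\sigma^2)^d c^{d-1}(c + \norm{a^*}^2)$ for the $C$-term, with an extra $1/\sigma^2$ from the chain rule in each — produces exactly the two summands in the statement once one uses the normalization $\norm{a^*} = 1$ adopted throughout Theorem~\ref{theorem:cov-gaussian-landscape} (so that $c + \norm{a^*}^2 = 1 + 2\varepsilon^2 + \sigma^2$). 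The only genuinely nontrivial move is the determinant-lemma step that lets the scalar $1 + \tfrac1{\sigma^2}a^T B^{-1}a$ be absorbed, turning $\abs{C}^{-1/2}C^{-1}a$ into something proportional to $\abs{C}^{-3/2}$; the rest is careful bookkeeping of the powers of $\sigma$ and of the $c_0^{d-1}$, $c^{d-1}$ factors coming from the orthogonal complement of $a$ (resp. $a^*$), and I would check the final expression against the scalar case $d=1$ as a sanity test.
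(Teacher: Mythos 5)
Your proof is correct and reproduces the stated formula exactly, including the signs and the constant bookkeeping; you also rightly flag that the factor $1+2\varepsilon^2+\sigma^2$ relies on the normalization $\norm{a^*}=1$ adopted in the proof of Theorem~\ref{theorem:cov-gaussian-landscape}, which is indeed implicit in the lemma as stated. Your route differs from the paper's mainly in the order of operations. The paper first applies Sylvester's determinant theorem to each determinant, writing it as a constant times $1+a^T D^{-1} a$ for a fixed matrix $D$ (either $(\varepsilon^2+\tfrac{\sigma^2}2)I$ or ${a^*}{a^*}^T+(2\varepsilon^2+\sigma^2)I$), so that only a scalar quadratic form in $a$ needs to be differentiated, followed by the chain rule for the power $-\tfrac12$. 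You instead differentiate the matrix functional directly via Jacobi's formula, obtaining terms of the form $\abs{M}^{-1/2}M^{-1}a$, and then invoke Sherman--Morrison together with the matrix determinant lemma (or, for the first term, the explicit eigendecomposition of $A$) to collapse $\abs{C}^{-1/2}C^{-1}a$ into $\abs{B}\,\abs{C}^{-3/2}B^{-1}a$. Both arguments hinge on the same rank-one determinant identity and involve comparable bookkeeping; the paper's ordering avoids Sherman--Morrison altogether, while yours stays coordinate-free until the end and makes it more transparent why the fixed matrix $({a^*}{a^*}^T+(2\varepsilon^2+\sigma^2)I)^{-1}$, rather than an $a$-dependent inverse, appears in the final gradient.
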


\begin{proof}
    By differentiating each term of Equation \ref{eq:mmd-cov} separately:
    \begin{align*}
        \nabla_a \[\frac1{\abs{\frac2{\sigma^2} (aa^T + \varepsilon^2I) + I}^\frac12}\]
        &= -\frac12 \abs{\frac2{\sigma^2} (aa^T + \varepsilon^2I) + I}^{-\frac32} \cdot \nabla_a \abs{\frac2{\sigma^2} (aa^T + \varepsilon^2I) + I} \\
        &\overset*= -\frac12 \abs{\frac2{\sigma^2} (aa^T + \varepsilon^2I) + I}^{-\frac32} \cdot \(\frac1{\sigma^2}\)^d (2 \varepsilon^2 + \sigma^2)^d (\varepsilon^2 + \frac{\sigma^2}2)^{-1} \nabla_a [a^T a]\\
        &= -\abs{\frac2{\sigma^2} (aa^T + \varepsilon^2I) + I}^{-\frac32} \cdot \(\frac1{\sigma^2}\)^d (2 \varepsilon^2 + \sigma^2)^d (\varepsilon^2 + \frac{\sigma^2}2)^{-1} a \\
        &= -2 \abs{\frac2{\sigma^2} (aa^T + \varepsilon^2I) + I}^{-\frac32} \cdot \(\frac1{\sigma^2}\)^d (2 \varepsilon^2 + \sigma^2)^{d-1} a
    \end{align*}
    $(*)$ using Sylvester's determinant theorem:
    \begin{align*}
        \abs{\frac2{\sigma^2} (aa^T + \varepsilon^2I) + I}
        &= \(\frac2{\sigma^2}\)^d \abs{ aa^T + (\varepsilon^2 + \frac{\sigma^2}2) I} \\
        &= \(\frac2{\sigma^2}\)^d \abs{ (\varepsilon^2 + \frac{\sigma^2}2) I} \abs{1 + a^T (\varepsilon^2 + \frac{\sigma^2}2)^{-1} a} \\
        &= \(\frac1{\sigma^2}\)^d (2 \varepsilon^2 + \sigma^2)^d (1 + a^T (\varepsilon^2 + \frac{\sigma^2}2)^{-1} a)
    \end{align*}
    
    \begin{align*}
        &\nabla_a \[-2 \frac1{\abs{\frac1{\sigma^2} (aa^T + {a^*}{a^*}^T + 2\varepsilon^2I) + I}^\frac12}\] \\
        &= \abs{\frac1{\sigma^2} (aa^T + {a^*}{a^*}^T + 2\varepsilon^2I) + I}^{-\frac32} \cdot \nabla_a \abs{\frac1{\sigma^2} (aa^T + {a^*}{a^*}^T + 2\varepsilon^2I) + I} \\
        &\overset*= \abs{\frac1{\sigma^2} (aa^T + {a^*}{a^*}^T + 2\varepsilon^2I) + I}^{-\frac32} \cdot \(\frac1{\sigma^2}\)^d (1 + 2\varepsilon^2 + \sigma^2) (2\varepsilon^2 + \sigma^2)^{d-1} \nabla_a a^T ({a^*}{a^*}^T + (2\varepsilon^2 + \sigma^2) I)^{-1} a \\
        &= 2 \abs{\frac1{\sigma^2} (aa^T + {a^*}{a^*}^T + 2\varepsilon^2I) + I}^{-\frac32} \cdot \(\frac1{\sigma^2}\)^d (1 + 2\varepsilon^2 + \sigma^2) (2\varepsilon^2 + \sigma^2)^{d-1} ({a^*}{a^*}^T + (2\varepsilon^2 + \sigma^2) I)^{-1} a
    \end{align*}
    $(*)$ using Sylvester's determinant theorem (and $a^* = e_1$):
    \begin{align*}
        \abs{\frac1{\sigma^2} (aa^T + {a^*}{a^*}^T + 2\varepsilon^2I) + I}
        &= \(\frac1{\sigma^2}\)^d \abs{aa^T + ({a^*}{a^*}^T + (2\varepsilon^2 + \sigma^2) I)} \\
        &= \(\frac1{\sigma^2}\)^d \abs{{a^*}{a^*}^T + (2\varepsilon^2 + \sigma^2) I} \abs{1 + a^T ({a^*}{a^*}^T + (2\varepsilon^2 + \sigma^2) I)^{-1} a} \\
        &= \(\frac1{\sigma^2}\)^d (1 + 2\varepsilon^2 + \sigma^2) (2\varepsilon^2 + \sigma^2)^{d-1} (1 + a^T ({a^*}{a^*}^T + (2\varepsilon^2 + \sigma^2) I)^{-1} a)
    \end{align*}
\end{proof}

\begin{lemma}
\label{lemma:gausian-cov-hessian}
The Hessian of the MMD loss is given by:
    \begin{align*}
        &\nabla^2 \MMD(a) \\
        &= 2 \sigma^{-d} (2 \varepsilon^2 + \sigma^2)^{d-1} \abs{2 aa^T + (2\varepsilon^2 + \sigma^2) I}^{-\frac32} \( 6 (2 \varepsilon^2 + \sigma^2)^{d-1} \abs{2aa^T + (2\varepsilon^2 + \sigma^2) I}^{-1} a a^T - I\) \\
        &\quad + 2 \sigma^{-d} (2 \varepsilon^2 + \sigma^2)^{d-1} (1 + 2\varepsilon^2 + \sigma^2) \abs{aa^T + {a^*}{a^*}^T + (2\varepsilon^2 + \sigma^2) I}^{-\frac32} 
        \Bigg(({a^*}{a^*}^T + (2\varepsilon^2 + \sigma^2) I)^{-1}
         \\
        &\quad\quad- 3 (1 + 2\varepsilon^2 + \sigma^2) (2\varepsilon^2 + \sigma^2)^{d-1} \abs{aa^T + {a^*}{a^*}^T + (2\varepsilon^2 + \sigma^2) I}^{-1} ({a^*}{a^*}^T + (2\varepsilon^2 + \sigma^2) I)^{-1} a a^T ({a^*}{a^*}^T + (2\varepsilon^2 + \sigma^2) I)^{-1} \Bigg)
    \end{align*}
\end{lemma}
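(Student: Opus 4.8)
The plan is to obtain the Hessian by differentiating the gradient of Lemma~\ref{lemma:gauusian-cov-gradient} once more with respect to $a$. Writing $c := 2\varepsilon^2 + \sigma^2$, that gradient is a sum of two terms, each of the form $\phi_i(a)\,v_i(a)$ with $\phi_i$ a scalar (a power of a determinant) and $v_i$ a vector: the first is $\phi_1(a)\,a$ with $\phi_1(a)\propto\abs{2aa^T + cI}^{-3/2}$, and the second is $\phi_2(a)\,({a^*}{a^*}^T + cI)^{-1}a$ with $\phi_2(a)\propto\abs{aa^T + {a^*}{a^*}^T + cI}^{-3/2}$. Crucially, the matrix $M := {a^*}{a^*}^T + cI$ that multiplies $a$ in the second term is constant in $a$. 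I would apply the product rule $\nabla(\phi v) = v\,(\nabla\phi)^T + \phi\,\nabla v$ to each term, so the Hessian splits into the four pieces $a(\nabla\phi_1)^T$, $\phi_1 I$, $(M^{-1}a)(\nabla\phi_2)^T$, and $\phi_2 M^{-1}$.

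The only nontrivial sub-step is differentiating the two determinant powers, and here I would reduce each to a scalar function of $a$ via Sylvester's determinant theorem, exactly as in the proof of Lemma~\ref{lemma:gauusian-cov-gradient}. After the rotation of Theorem~\ref{theorem:cov-gaussian-landscape} (so that $a^* = e_1$ and $\norm{a^*}^2 = 1$), Sylvester gives $\abs{2aa^T + cI} = c^{d-1}(c + 2\norm{a}^2)$ and $\abs{aa^T + M} = \abs{M}(1 + a^T M^{-1}a) = c^{d-1}(1+c)(1 + a^T M^{-1}a)$. Differentiating the resulting scalar powers, with $\nabla_a\norm{a}^2 = 2a$ and $\nabla_a(a^T M^{-1}a) = 2M^{-1}a$ supplied by the chain rule and a factor $-3$ from the exponent $-\frac32$, yields $\nabla\phi_1 \propto -\abs{2aa^T + cI}^{-3/2}(c+2\norm{a}^2)^{-1}a$ and $\nabla\phi_2 \propto -\abs{aa^T+M}^{-3/2}(1 + a^T M^{-1}a)^{-1}M^{-1}a$.

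Substituting these back assembles the claimed formula: $\phi_1 I$ supplies the $-I$ in the first line and $a(\nabla\phi_1)^T$ supplies its rank-one $aa^T$ correction, while $\phi_2 M^{-1}$ supplies the leading $({a^*}{a^*}^T+cI)^{-1}$ in the second line and $(M^{-1}a)(\nabla\phi_2)^T$ supplies the $({a^*}{a^*}^T+cI)^{-1}aa^T({a^*}{a^*}^T+cI)^{-1}$ correction. The final cosmetic step rewrites the scalar factors through $c^{d-1}(c+2\norm{a}^2)^{-1} = c^{2(d-1)}\abs{2aa^T+cI}^{-1}$ and $c^{d-1}(1+c)(1+a^TM^{-1}a)^{-1} = c^{2(d-1)}(1+c)^2\abs{aa^T+M}^{-1}$, which produces exactly the $c^{d-1}\abs{\cdot}^{-1}$ groupings in the statement. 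The main obstacle is therefore not conceptual but purely one of bookkeeping: keeping the powers of $c$, $\sigma$, and $(1+c)$ aligned across the four pieces, and confirming that the outer-product piece $(M^{-1}a)(\nabla\phi_2)^T$ is symmetric, which holds because $M^{-1}$ is symmetric so that $(M^{-1}a)^T = a^T M^{-1}$.
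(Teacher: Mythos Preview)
Your proposal is correct and follows essentially the same route as the paper: differentiate the gradient of Lemma~\ref{lemma:gauusian-cov-gradient} term by term via the product rule $\nabla(\phi v) = v(\nabla\phi)^T + \phi\,\nabla v$, and handle the determinant derivatives by first collapsing each determinant to a scalar function of $a$ through Sylvester's identity (exactly as in the proof of Lemma~\ref{lemma:gauusian-cov-gradient}). The paper's proof is organized identically, setting aside the outer constants and treating the two terms separately, so there is no substantive difference in method.
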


\begin{proof}
    We take the gradient of the gradient in Lemma \ref{lemma:gauusian-cov-gradient}. We begin with the first term (and set aside the constant $2 \sigma^{-2d} (2 \varepsilon^2 + \sigma^2)^{d-1}$ for simplicity).
        \begin{align*}
            &\nabla \[- \abs{\frac2{\sigma^2} (aa^T + \varepsilon^2I) + I}^{-\frac32} a\] \\
            &= - a \nabla \[ \abs{\frac2{\sigma^2} (aa^T + \varepsilon^2I) + I}^{-\frac32} \]^T - \abs{\frac2{\sigma^2} (aa^T + \varepsilon^2I) + I}^{-\frac32} \nabla \[a\] \\
            &= \frac32 \abs{\frac2{\sigma^2} (aa^T + \varepsilon^2I) + I}^{-\frac52} a \nabla \[ \abs{\frac2{\sigma^2} (aa^T + \varepsilon^2I) + I} \]^T - \abs{\frac2{\sigma^2} (aa^T + \varepsilon^2I) + I}^{-\frac32} I \\
            &\overset{*}= 6 \(\frac1{\sigma^2}\)^d (2 \varepsilon^2 + \sigma^2)^{d-1} \abs{\frac2{\sigma^2} (aa^T + \varepsilon^2I) + I}^{-\frac52} a a^T - \abs{\frac2{\sigma^2} (aa^T + \varepsilon^2I) + I}^{-\frac32} I \\
            &= \sigma^{3d} \abs{2aa^T + (2\varepsilon^2 + \sigma^2) I}^{-\frac32} \( 6 (2 \varepsilon^2 + \sigma^2)^{d-1} \abs{2aa^T + (\varepsilon^2 + \sigma^2) I}^{-1} a a^T - I\)
        \end{align*}
        $(*)$ As we found in Lemma \ref{lemma:gauusian-cov-gradient}.
        For the second term, we set aside the constant $2 \(\frac1{\sigma^2}\)^d (1 + 2\varepsilon^2 + \sigma^2) (2\varepsilon^2 + \sigma^2)^{d-1}$ for simplicity.
        \begin{align*}
            &\nabla \[ \abs{\frac1{\sigma^2} (aa^T + {a^*}{a^*}^T + 2\varepsilon^2I) + I}^{-\frac32} ({a^*}{a^*}^T + (2\varepsilon^2 + \sigma^2) I)^{-1} a \] \\
            &= ({a^*}{a^*}^T + (2\varepsilon^2 + \sigma^2) I)^{-1} a \nabla\[ \abs{\frac1{\sigma^2} (aa^T + {a^*}{a^*}^T + 2\varepsilon^2I) + I}^{-\frac32} \]^T \\&\qquad\qquad+ \abs{\frac1{\sigma^2} (aa^T + {a^*}{a^*}^T + 2\varepsilon^2I) + I}^{-\frac32} ({a^*}{a^*}^T + (2\varepsilon^2 + \sigma^2) I)^{-1} \nabla \[ a \] \\
            &= -\frac32 \abs{\frac1{\sigma^2} (aa^T + {a^*}{a^*}^T + 2\varepsilon^2I) + I}^{-\frac52} ({a^*}{a^*}^T + (2\varepsilon^2 + \sigma^2) I)^{-1} a \nabla\[ \abs{\frac1{\sigma^2} (aa^T + {a^*}{a^*}^T + 2\varepsilon^2I) + I} \]^T \\&\qquad\qquad+ \abs{\frac1{\sigma^2} (aa^T + {a^*}{a^*}^T + 2\varepsilon^2I) + I}^{-\frac32} ({a^*}{a^*}^T + (2\varepsilon^2 + \sigma^2) I)^{-1} \\
            &\overset{*}= -3 \(\frac1{\sigma^2}\)^d (1 + 2\varepsilon^2 + \sigma^2) (2\varepsilon^2 + \sigma^2)^{d-1} \abs{\frac1{\sigma^2} (aa^T + {a^*}{a^*}^T + 2\varepsilon^2I) + I}^{-\frac52} ({a^*}{a^*}^T + (2\varepsilon^2 + \sigma^2) I)^{-1} a a^T ({a^*}{a^*}^T + (2\varepsilon^2 + \sigma^2) I)^{-1} \\&\qquad\qquad+ \abs{\frac1{\sigma^2} (aa^T + {a^*}{a^*}^T + 2\varepsilon^2I) + I}^{-\frac32} ({a^*}{a^*}^T + (2\varepsilon^2 + \sigma^2) I)^{-1} \\
            &= \sigma^{3d} \abs{aa^T + {a^*}{a^*}^T + (2\varepsilon^2 + \sigma^2) I}^{-\frac32} \Bigg[ 
                ({a^*}{a^*}^T + (2\varepsilon^2 + \sigma^2) I)^{-1} - 3 (1 + 2\varepsilon^2 + \sigma^2) (2\varepsilon^2 + \sigma^2)^{d-1} \\&\qquad\qquad \cdot \abs{aa^T + {a^*}{a^*}^T + (2\varepsilon^2 + \sigma^2) I}^{-1} ({a^*}{a^*}^T + (2\varepsilon^2 + \sigma^2) I)^{-1} a a^T ({a^*}{a^*}^T + (2\varepsilon^2 + \sigma^2) I)^{-1}
            \qquad\quad\Bigg]
        \end{align*}
        $(*)$ As we found in Lemma \ref{lemma:gauusian-cov-gradient}.
\end{proof}

\begin{lemma} \label{lemma:cov_bounded}
$\norm[2]{\nabla^2 \MMD(a)}$ is bounded for all $a$.
\end{lemma}
\begin{proof}
Let $a$ be some random direction. It is enough to show that $\abs{ v^T \nabla^2 \MMD(a) v }$ is bounded above for any vector $v$ is unit size ($\norm{v}=1$). Using the Hessian (Lemma \ref{lemma:gausian-cov-hessian}) and $c:=2 \varepsilon^2 + \sigma^2$ for simplicity:
    \begin{multline}
        v^T \nabla^2 \MMD(a) v \propto \abs{2 aa^T + c I}^{-\frac32} \( 6 c^{d-1} \abs{2aa^T + c I}^{-1} (a^T v)^2 - 1\)
        + (1 + c) \abs{aa^T + {a^*}{a^*}^T + c I}^{-\frac32} 
        \\\cdot v^T \Bigg(({a^*}{a^*}^T + c I)^{-1}
         - 3 (1 + c) c^{d-1} \abs{aa^T + {a^*}{a^*}^T + c I}^{-1} ({a^*}{a^*}^T + c I)^{-1} a a^T ({a^*}{a^*}^T + c I)^{-1} \Bigg) v
    \end{multline}
By Woodbury's identity, $\(a^*{a^*}^T + c I\)^{-1} = c^{-1} I - c^{-1} (1 + c)^{-1} {a^*}{a^*}^T$, 
plugging all together and switch negative expressions by positive (triangle inequality), the following holds up to positive constant:
\begin{multline}
    \abs{v^T \nabla^2 \MMD(a) v} \le^\propto \abs{2 aa^T + c I}^{-\frac32} \( 6 c^{d-1} \abs{2aa^T + c I}^{-1} (a^T v)^2 + 1\)
    + c^{-1} (1 + c) \abs{aa^T + {a^*}{a^*}^T + c I}^{-\frac32} 
    \\\cdot \Bigg(
    1 + (1 + c)^{-1} ({a^*}^T v)^2 
     + 3 (1 + c) c^{d-2} \abs{aa^T + {a^*}{a^*}^T + c I}^{-1} \\\Big((a^T v)^2 + (1+c)^{-1} (a^T v) ({a^*}^T v) + (1+c)^{-2} (a^T a^*)^2 ({a^*}^T v)^2 \Big) \Bigg)
\end{multline}
Instead of proving for any $v$, we can take orthogonal basis such that $v_1 \perp a$, $v_2 \perp a^*$, and $v_i \perp a, a^*$ for $2 < i \le d$.

Taking $v \perp a, a^*$:
\begin{equation}
        \abs{v^T \nabla^2 \MMD(a) v} \le^\propto \abs{2 aa^T + c I}^{-\frac32} + c^{-1} (1 + c) \abs{aa^T + {a^*}{a^*}^T + c I}^{-\frac32}
        \le \abs{c I}^{-\frac32} + c^{-1} (1 + c) \abs{{a^*}{a^*}^T + c I}^{-\frac32}
\end{equation}

W.L.O.G we can assume $a$ and $a^*$ are not linearly independent (otherwise $v \perp a$ or $v \perp a^*$ are equivalent to $v \perp a, a^*$ case). Taking $v = v_1 \perp a$:
\begin{align}
    \abs{v^T \nabla^2 \MMD(a) v} &\le^\propto \abs{2 aa^T + c I}^{-\frac32}
        + c^{-1} (1 + c) \abs{aa^T + {a^*}{a^*}^T + c I}^{-\frac32} 
        \\&\qquad\qquad \cdot \Bigg(
        1 + (1 + c)^{-1} ({a^*}^T v)^2 
         + 3 (1 + c)^{-1} c^{d-2} \abs{aa^T + {a^*}{a^*}^T + c I}^{-1} (a^T a^*)^2 ({a^*}^T v)^2 \Bigg) \\
    & \le \abs{c I}^{-\frac32} + c^{-1} (1 + c) \abs{{a^*}{a^*}^T + c I}^{-\frac32} \( 1 + (1 + c)^{-1} ({a^*}^T v)^2  \) \\&\qquad\qquad\qquad\qquad\qquad\qquad\qquad\qquad\qquad\qquad + 3 c^{d-3} \abs{aa^T + c I}^{-\frac52} (a^T a^*)^2 ({a^*}^T v)^2
\end{align}

To bound the last term we use $\hat{v} = a^* - \frac{a^T a^*}{\norm{a}^2} a$ and $v = \frac{\hat{v}}{\norm{\hat{v}}}$ which maximize the last expression:
\begin{multline}
    \le \abs{c I}^{-\frac32} + c^{-1} (1 + c) \abs{{a^*}{a^*}^T + c I}^{-\frac32} \( 1 + (1 + c)^{-1} \frac{\norm{a^*}^4}{\norm{a^* - \frac{a^T a^*}{\norm{a}^2} a}^2} \(1 - \frac{(a^T a^*)^2}{\norm{a^*}^2\norm{a}^2} \)^2  \) \\+ 3 c^{d-3} \abs{aa^T + c I}^{-\frac52} (a^T a^*)^2 \frac{\norm{a^*}^4}{\norm{a^* - \frac{a^T a^*}{\norm{a}^2} a}^2} \(1 - \frac{(a^T a^*)^2}{\norm{a^*}^2 \norm{a}^2}\)^2
\end{multline}

We are left to handle the two terms that depend on $a$. Denote $\alpha = \frac{\abs{a^T a^*}^2}{\norm{a}^2\norm{a^*}^2} \in (0,1)$. Then the first term is bounded by:
\begin{equation*}
    \frac1{\norm{a^* - \frac{a^T a^*}{\norm{a}^2} a}^2} \(1 - \frac{(a^T a^*)^2}{\norm{a^*}^2\norm{a}^2} \)^2
    = \frac{\(1 - \alpha \)^2}{\norm{a^*}^2(1 - \alpha)} \le \frac1{\norm{a^*}^2}
\end{equation*}
and for the second term: 
\begin{equation}
    \abs{aa^T + c I}^{-\frac52} \frac{(a^T a^*)^2}{\norm{a^* - \frac{a^T a^*}{\norm{a}^2} a}^2} \(1 - \frac{(a^T a^*)^2}{\norm{a^*}^2 \norm{a}^2}\)^2
    \propto \( 1 + \frac{c}{\norm{a}^2} \)^{-2.5} \frac{\alpha (1 - \alpha)}{\norm{a}^3}
    \le \( 1 + \frac{c}{\norm{a}^2} \)^{-2.5} \norm{a}^{-3}
\end{equation}
Which converges to zero as $\norm{a} \to 0$ or $\norm{a} \to \infty$ (independent of the direction of $a$), so the term is bounded above as function of the size of $a$.

Finally for $v = v_2 \perp a^*$:
\begin{multline}
    \abs{v^T \nabla^2 \MMD(a) v} \le^\propto \abs{2 aa^T + c I}^{-\frac32} \( 6 c^{d-1} \abs{2aa^T + c I}^{-1} (a^T v)^2 + 1\) \\ + c^{-1} (1 + c) \abs{aa^T + {a^*}{a^*}^T + c I}^{-\frac32} \cdot \Bigg(1 + 3 (1 + c) c^{d-2} \abs{aa^T + {a^*}{a^*}^T + c I}^{-1} (a^T v)^2 \Bigg) \\
    \le 6 c^{d-1} \abs{2aa^T + c I}^{-2.5} (a^T v)^2 + \abs{c I}^{-\frac32} + c^{-1} (1 + c) \abs{{a^*}{a^*}^T + c I}^{-\frac32} \\ + 3 (1 + c)^2 c^{d-3} \abs{aa^T + c I}^{-2.5} (a^T v)^2
\end{multline}

To bound the last term we use $\hat{v} = a - \frac{a^T a^*}{\norm{a^*}^2} a^*$ and $v = \frac{\hat{v}}{\norm{\hat{v}}}$ which maximize the last expression:
\begin{multline}
    \le 6 c^{d-1} \abs{2aa^T + c I}^{-2.5} \frac{\norm{a}^4}{\norm{a - \frac{a^T a^*}{\norm{a^*}^2} a^*}^2} (1 - \alpha)^2 + \abs{c I}^{-\frac32} + c^{-1} (1 + c) \abs{{a^*}{a^*}^T + c I}^{-\frac32} \\ + 3 (1 + c)^2 c^{d-3} \abs{aa^T + c I}^{-2.5} \frac{\norm{a}^4}{\norm{a - \frac{a^T a^*}{\norm{a^*}^2} a^*}^2} (1 - \alpha)^2
\end{multline}
In this case:
\begin{equation}
    \abs{aa^T + c I}^{-2.5} \frac{\norm{a}^4 (1-\alpha)^2}{\norm{a - \frac{a^T a^*}{\norm{a^*}^2} a^*}^2} \propto (\norm{a}^2 + c)^{-2.5} \frac{\norm{a}^4 (1-\alpha)^2}{\norm{a}^2 (1 - \alpha)}
    \le \( 1 + \frac{c}{\norm{a}^2} \)^{-2.5} \norm{a}^{-3}
\end{equation}
Which we already showed to be bounded above.

We conclude that we can upper bound $\abs{v^T \nabla^2 \MMD(a) v}$ with a constant independent of $a$
\end{proof}

\begin{lemma}\label{lemma:mean_bounded}
    $\norm[2]{\nabla^2 \MMD(\mu)}$ is bounded for all $\mu$.
\end{lemma}
\begin{proof}
Given the gradient up to a multiplicative positive factor:
\begin{equation}
    \nabla \MMD(P_{\mu^*}, P_\mu) \propto 
            e^{-\frac12 c^{-1} \norm{\mu - \mu^*}^2} (\mu - \mu^*)
            \\+ e^{-\frac12 c^{-1} \norm{\mu + \mu^*}^2} (\mu + \mu^*)
            - 2 e^{-\frac12 c^{-1} \norm{2\mu}^2} \mu
\end{equation}
it is easy to compute the Hessian:
\begin{multline}
    \nabla^2 \MMD(P_{\mu^*}, P_\mu) \propto 
        e^{-\frac12 c^{-1} \norm{\mu - \mu^*}^2} \cdot \( I - c^{-1} \norm{\mu - \mu^*}^2 (\mu - \mu^*) (\mu - \mu^*)^T \)
        \\
        + e^{-\frac12 c^{-1} \norm{\mu + \mu^*}^2} \( I - c^{-1} \norm{\mu + \mu^*} (\mu + \mu^*) (\mu + \mu^*)^T \)
        - 2 e^{-\frac12 c^{-1} \norm{2\mu}^2} \( I - 2 c^{-1} \norm{2\mu} \mu \mu^T \) 
\end{multline}
For any direction $v$ in unit size (and by the triangle inequality):
\begin{multline}
    \abs{v^T \nabla^2 \MMD(P_{\mu^*}, P_\mu) v} \leq \propto
        e^{-\frac12 c^{-1} \norm{\mu - \mu^*}^2} \cdot \( 1 + c^{-1} \norm{\mu - \mu^*}^2 ((\mu - \mu^*)^T v)^2 \)
        \\
        + e^{-\frac12 c^{-1} \norm{\mu + \mu^*}^2} \( 1 + c^{-1} \norm{\mu + \mu^*} ((\mu + \mu^*)^T v)^2 \)
        + 2 e^{-\frac12 c^{-1} \norm{2\mu}^2} \( 1 + 2 c^{-1} \norm{2\mu} (\mu^T v)^2 \)
\end{multline}

For $v \perp \mu, \mu - \mu^*, \mu + \mu^*$ is easy to bound the above by $3$. For general $v$ the above will depend on the components
of $v$ in each of these three vectors. Denote one of these three vectors by $a$. Then we need to bound:
\begin{equation}
    e^{-\frac12 c^{-1} \norm{a}^2} \norm{a} (a v)^2
    \le e^{-\frac12 c^{-1} \norm{a}^2} \norm{a}^5
\end{equation}
Denote $g(t) = e^{-\frac12 c^{-1} t^2} t^5$, so:
\begin{equation}
    g'(t) =  e^{-\frac12 c^{-1} t^2} \cdot (5 - c^{-1} t^2) \cdot t^4 = 0
\end{equation}
and the above holds only for $t = 0$ or $t = \sqrt{5c}$, hence $\abs{v^T \nabla^2 \MMD(P_{\mu^*}, P_\mu) v}$ bounded by $3 + e^{-\frac52} \norm{5c}^\frac52$.
\end{proof}

\newpage

\section{EXPERIMENTS}

\subsection{Graphs Parameters}
In the following tables we describe the parameters which were used in the experiments:

\begin{table}[h!]
\caption{Experiments Details - Gaussian with Unknown Mean} \label{tbl:gaussian-mean}
\begin{center}
\begin{tabular}{lccccc}
\textbf{Loss}  &\textbf{Learning rate} &\textbf{Iterations} &\textbf{Bandwidth} &\textbf{Fakes ratio} &\textbf{Repeats} \\
\hline
MMD & $10^{-1}$ & 500 & $10^1$ & 1 & 800 \\
OSMMD & $10^{-1}$ & 500 & $10^1$ & & 100 \\
MLE & $10^{-1}$ & 500 & & & 100 \\
WGAN & $10^{-2}$ & 1000 & & 1 & 500
\end{tabular}
\end{center}
\end{table}

\begin{table}[h!]
\caption{Experiments Details - GMM with Unknown Mean} \label{tbl:gmm-mean}
\begin{center}
\begin{tabular}{lccccc}
\textbf{Loss}  &\textbf{Learning rate} &\textbf{Iterations} &\textbf{Bandwidth} &\textbf{Fakes ratio} &\textbf{Repeats} \\
\hline
MMD & $10^{-1}$ & 1000 & $10^1$ & 1 & 700 \\
OSMMD & $10^{-1}$ & 1000 & $10^1$ & & 100 \\
MLE & $10^{-1}$ & 1000 & & & 100 \\
WGAN & $10^{-1}$ & 1000 & & 1 & 1000
\end{tabular}
\end{center}
\end{table}

\begin{table}[h!]
\caption{Experiments Details - Gaussian with Unknown Covariance} \label{tbl:gaussian-cov}
\begin{center}
\begin{tabular}{lccccc}
\textbf{Loss}  &\textbf{Learning rate} &\textbf{Iterations} &\textbf{Bandwidth} &\textbf{Fakes ratio} &\textbf{Repeats} \\
\hline
MMD & $10^{-1}$ & 1000 & $10^2$ & 1 & 800 \\
OSMMD & $10^{-1}$ & 1000 & $10^1$ & & 100 \\
MLE & $10^{-1}$ & 1000 & & & 100 \\
WGAN & $10^{-1}$ & 1000 & & 1 & 800
\end{tabular}
\end{center}
\end{table}

\begin{table}[h!]
\caption{Experiments Details - GMM with Unknown Covariance} \label{tbl:gmm-cov}
\begin{center}
\begin{tabular}{lccccc}
\textbf{Loss}  &\textbf{Learning rate} &\textbf{Iterations} &\textbf{Bandwidth} &\textbf{Fakes ratio} &\textbf{Repeats} \\
\hline
MMD & $10^{-2}$ & 1000 & $10^1$ & 1 & 200 \\
OSMMD & $10^{-1}$ & 1000 & $10^1$ & & 200 \\
MLE & $10^{-1}$ & 1000 & & & 200 \\
WGAN & $10^{-2}$ & 1000 & & 1 & 200
\end{tabular}
\end{center}
\end{table}

\begin{table}[h!]
\caption{Experiments Details - Gaussian with Unknown Covariance, Epsilon Comparing} \label{tbl:gaussian-cov-epsilon}
\begin{center}
\begin{tabular}{lccccc}
\textbf{Loss}  &\textbf{Learning rate} &\textbf{Iterations} &\textbf{Bandwidth} &\textbf{Fakes ratio} &\textbf{Repeats} \\
\hline
OSMMD & $10^1$ & 5000 & $10^4$ & & 100 \\
MLE & $10^{-1}$ & 10000 & & & 100
\end{tabular}
\end{center}
\end{table}

\newpage
\subsection{Linear Unmixing}

For each noise variance we initialized in VCA, and the following setups with 256 fake sampling each iteration and over 5000 iterations:
For each noise we run the following setups:
\begin{table}[h!]
\caption{MMD and WGAN setups} \label{tbl:mmd-setups}
\begin{center}
\begin{tabular}{lcccc}
\textbf{Loss} &\textbf{Learning rate} &\textbf{Bandwidth} \\
\hline
MMD 1 & $10^{-2}$ & $10^{-2}$ \\
MMD 2 & $10^{-2}$ & $10^{-1}$ \\
MMD 3 & $10^{-2}$ & $10^{0}$ \\
MMD 4 & $10^{-1}$ & $10^{-2}$ \\
MMD 5 & $10^{-1}$ & $10^{-1}$ \\
MMD 6 & $10^{-1}$ & $10^{0}$
\end{tabular}
\quad
\begin{tabular}{lcccc}
\textbf{Loss}  &\textbf{Learning rate} &\textbf{Layers} \\
\hline
WGAN 1 & $10^{-3}$ & 2 \\
WGAN 2 & $10^{-2}$ & 2 \\
WGAN 3 & $10^{-1}$ & 2 \\
WGAN 4 & $10^{-3}$ & 3 \\
WGAN 5 & $10^{-2}$ & 3 \\
WGAN 6 & $10^{-1}$ & 3
\end{tabular}
\end{center}
\end{table}

Final results of experiments (each averaged over $50$ trials) were:

\begin{table}[h!]
\caption{Linear Unmixing results} \label{tbl:unmixing-res}
\begin{center}
\begin{tabular}{ccccccc}
\textbf{Loss} &\textbf{Random} &\textbf{VCA} \\&\textbf{MMD 1} &\textbf{MMD 2} &\textbf{MMD 3} &\textbf{MMD 4} &\textbf{MMD 5} &\textbf{MMD 6} \\ &\textbf{WGAN 1} &\textbf{WGAN 2} &\textbf{WGAN 3} &\textbf{WGAN 4} &\textbf{WGAN 5} &\textbf{WGAN 6} \\
\hline
0.01 & \makecell{7.68\\(0.72)} & \makecell{0.77\\(0.23)} \\& \makecell{0.36\\(0.19)} & \makecell{0.43\\(0.14)} & \makecell{0.55\\(0.17)} & \makecell{0.56\\(0.21)} & \makecell{0.46\\(0.17)} & \makecell{0.60\\(0.19)} \\& \makecell{0.60\\(0.23)} & \makecell{0.74\\(0.22)} & \makecell{6.38\\(15.23)} & \makecell{0.61\\(0.25)} & \makecell{0.68\\(0.20)} & \makecell{2.58\\(1.82)} \\
0.1 & \makecell{7.60\\(0.62)} & \makecell{0.97\\(0.73)} \\& \makecell{1.00\\(1.11)} & \makecell{0.56\\(0.24)} & \makecell{0.63\\(0.20)} & \makecell{5.92\\(10.25)} & \makecell{0.61\\(0.26)} & \makecell{0.67\\(0.21)} \\& \makecell{0.69\\(0.46)} & \makecell{1.36\\(0.86)} & \makecell{9.03\\(18.59)} & \makecell{0.72\\(0.61)} & \makecell{0.87\\(0.58)} & \makecell{3.64\\(2.62)} \\
1.0 & \makecell{7.46\\(0.66)} & \makecell{4.67\\(1.13)} \\& \makecell{4.68\\(1.13)} & \makecell{4.89\\(1.02)} & \makecell{4.09\\(0.81)} & \makecell{4.95\\(1.51)} & \makecell{22.95\\(1.74}) & \makecell{4.15\\(0.80)} \\& \makecell{3.77\\(0.89)} & \makecell{3.56\\(0.88)} & \makecell{13.96\\(18.04)} & \makecell{3.70\\(0.90)} & \makecell{3.42\\(0.92)} & \makecell{4.76\\(1.98)} 
\end{tabular}
\end{center}
\end{table}
 
\end{document}